\documentclass[10pt,twocolumn,letterpaper]{article}

\usepackage{cvpr}
\usepackage{nopageno}
\usepackage{times}
\usepackage{epsfig}
\usepackage{graphicx}
\usepackage{amsmath}
\usepackage{amsthm}
\usepackage{amssymb}

\usepackage{bbm}
\usepackage{caption}
\usepackage{subcaption}
\usepackage[bookmarks=false]{hyperref}

\usepackage{algorithm}
\usepackage[noend]{algpseudocode}

\usepackage{multirow,bigdelim}
\usepackage{bm}
\usepackage{nicefrac}
\usepackage{verbatim}
\usepackage{booktabs}
\newtheorem{theorem}{Theorem}
\newtheorem{lemma}{Lemma}

\usepackage{listings}

\definecolor{codeblue}{rgb}{0,0,0.6}
\definecolor{codegreen}{rgb}{0,0.6,0}
\definecolor{codegray}{rgb}{0.5,0.5,0.5}
\definecolor{codeturquoise}{rgb}{0.0,0.5,0.5}
\definecolor{backcolour}{rgb}{0.98,0.98,0.98}

\lstdefinestyle{mystyle}{
	backgroundcolor=\color{backcolour},   
	commentstyle=\color{codegreen},
	keywordstyle=\color{codeblue},
	numberstyle=\tiny\color{codegray},
	stringstyle=\color{codeturquoise},
	basicstyle=\footnotesize\ttfamily,
	breakatwhitespace=false,         
	breaklines=true,                 
	captionpos=b,                    
	keepspaces=true,                 
	numbers=left,                    
	numbersep=5pt,                  
	showspaces=false,                
	showstringspaces=false,
	showtabs=false,                  
	tabsize=2
}

\lstset{style=mystyle}

\usepackage{makecell}
\usepackage{pifont}% http://ctan.org/pkg/pifont
%
%

% Include other packages here, before hyperref.

% If you comment hyperref and then uncomment it, you should delete
% egpaper.aux before re-running latex.  (Or just hit 'q' on the first latex
% run, let it finish, and you should be clear).
%\usepackage[breaklinks=true,bookmarks=false]{hyperref}

\cvprfinalcopy % *** Uncomment this line for the final submission

 % *** Enter the CVPR Paper ID here

% Pages are numbered in submission mode, and unnumbered in camera-ready
% \ifcvprfinal\pagestyle{empty}\fi

% Comment out to remove page count
%\setcounter{page}{1}

\begin{document}

%%%%%%% Comment out to include page number:
\thispagestyle{empty}

%%%%%%%%% TITLE
\title{Learning From Noisy Labels By \\
Regularized Estimation Of Annotator Confusion}

%%%%%%%%%%%% AUTHORS %%%%%%%%%%%%%
% not in particular order
% Ryutaro Tanno 
% Ardavan Saeedi
% Swami Sankar 
% Daniel C. Alexander 
% Nathan Silberman \\

\author{Ryutaro Tanno$^{1}$ \thanks{A part of the work done during internship at Butterfly Network.} \quad \quad Ardavan Saeedi$^{2}$ \quad \quad
Swami Sankaranarayanan$^{2}$ \\ \quad \quad
Daniel C. Alexander$^{1}$ \quad \quad
Nathan Silberman$^{2}$ \\
%\vspace{0.3in}
    $^1$University College London, UK  \quad \quad 
    $^2$Butterfly Network, New York, USA \\
    \vspace{0.in}
    $^1$ {\tt\footnotesize \{r.tanno, d.alexander\}@ucl.ac.uk} \quad \quad 
	$^2${\tt\tt\footnotesize\{asaeedi,swamiviv,nsilberman\}@butterflynetinc.com}\\
\vspace{-0.4in}
}

\maketitle
%\thispagestyle{empty}
% \vspace{-15mm}
\begin{abstract} 
\vspace{-3mm}
The predictive performance of supervised learning algorithms depends on the quality of labels. In a typical label collection process, multiple annotators provide subjective noisy estimates of the ``truth" under the influence of their varying skill-levels and biases. Blindly treating these noisy labels as the ground truth limits the accuracy of learning algorithms in the presence of strong disagreement. This problem is critical for applications in domains such as medical imaging where both the annotation cost and inter-observer variability are high. In this work, we present a method for simultaneously learning the individual annotator model and the underlying true label distribution, using only noisy observations. Each annotator is modeled by a confusion matrix that is jointly estimated along with the classifier predictions. We propose to add a regularization term to the loss function that encourages convergence to the true annotator confusion matrix. We provide a theoretical argument as to how the regularization is essential to our approach both for the case of single annotator and multiple annotators. Despite the simplicity of the idea, experiments on image classification tasks with both simulated and real labels show that our method either outperforms or performs on par with the state-of-the-art methods and is capable of estimating the skills of annotators even with a single label available per image.  

\end{abstract}
\vspace{-5mm}
%%%%%%%%%%%%%%%%%%%%%%%% Introduction %%%%%%%%%%%%%%%%%%%%%%%%%%
\section{Introduction}
In many practical applications, supervised learning algorithms are trained on noisy labels obtained from multiple annotators of varying skill levels and biases. When there is a substantial amount of disagreement in the labels, conventional training algorithms that treat such labels as the ``truth" lead to models with limited predictive performance. To mitigate such variation, practitioners typically abide by the principle of ``wisdom of crowds" \cite{surowiecki2005wisdom} and aggregate labels by computing the majority vote. However, this approach has limited efficacy in applications where the number of annotations is modest or the tasks are ambiguous. For example,  vision applications in medical image analysis \cite{litjens2017survey} require annotations from clinical experts, which incur high costs and often suffer from high inter-reader variability \cite{watadani2013interobserver,rosenkrantz2013comparison,lazarus2006bi,warfield2004simultaneous}. 

% pronounced in the early stage of crowd-sourcing where the number of annotations is limited or in domains such as medical image analysis where the cost of label acquisition is high.  

%In the last few years, deep learning techniques have permeated the field of medical image processing [1,2]. Modern research in automation of radiological tasks through deep learning have shown a great promise in an array of applications such as segmentation [3], detection [4], disease grading and classification 
%despite the recent surge of 

However, if the exact process by which each annotator generates the labels was known, we could correct the annotations accordingly and thus train our model on a cleaner set of data. Furthermore, this additional knowledge of the annotators' skills can be utilized to decide on which examples to be labeled by which annotators \cite{welinder2010online,long2013active,long2015multi}. Therefore, methods that can accurately model the label noise of annotators are useful for improving not only the accuracy of the trained model, but also the quality of labels in the future. 

Previous work proposed various methods for jointly estimating the skills of the annotators and the ground truth (GT) labels. We categorize these methods into two groups: (1) \textit{two-stage} approach and (2) \textit{simultaneous} approach. Methods in the first category perform label aggregation and training of a supervised learning model in two separate steps. The noisy labels $\widetilde{\mathbf{Y}}$ are first aggregated by building a probabilistic model of annotators. The observable variables are the noisy labels $\widetilde{\mathbf{Y}}$, and the latent variables/parameters to be estimated are the annotator skills and GT labels $\mathbf{Y}$. Then, a machine learning model is trained on the pairs of aggregated labels $\mathbf{Y}$ and input examples $\mathbf{X}$ (e.g. images) to perform the task of interest. The initial attempt was made in \cite{dawid1979maximum} in the early 1970s and more recently, numerous lines of research \cite{smyth1995inferring,warfield2004simultaneous,whitehill2009whose,welinder2010multidimensional,rodrigues2013learning} proposed extensions of this work e.g. by estimating the difficulty of each example. However, in all these cases, information about the raw inputs $\mathbf{X}$ is completely neglected in the generative model of noisy labels used in the aggregation step, and this highly limits the quality of estimated true labels in practice.

The \textit{simultaneous} approaches \cite{raykar2009supervised,yan2010modeling,branson2017lean,van2018lean} address this issue by integrating the prediction of the supervised learning model (i.e. distribution $p(\mathbf{Y}|\mathbf{X})$) into the probabilistic model of noisy labels, and have been shown to improve the predictive performance. These methods employ variants of the expectation-maximization (EM) algorithm during training, and require a reasonable number of labels for each example. However, in most real world applications, it is practically prohibitive to collect a large number of labels per example, and this requirement limits their applications. A notable exception is the Model Boostrapped EM (MBEM) algorithm presented in \cite{khetan2017learning} that is capable of learning even with little label redundancy.

%\textcolor{red}{\cite{branson2017lean,van2018lean} proposed ideas for integrating these methods into online crowdsourcing platforms and sho these models into the online crowdsourcing  also mention that estimating the skill levels more accurately are important for online crowd ssourcing}. 
%%%%%  Some references:
% \textbf{Combine crowd-sourcing with supervised learning:} \cite{raykar2009supervised,raykar2010learning,rodrigues2013learning,yan2010modeling,guan2017said,xiao2015learning,branson2017lean,van2018lean,khetan2017learning}
% Mention that \cite{raykar2009supervised,raykar2010learning,yan2010modeling,branson2017lean} were limited to binary classification problems.

In this paper, we propose a more effective alternative to these EM-based approaches for jointly modeling the annotator skills and GT label distribution. Our method separates the annotation noise from true labels by
(1) ensuring high fidelity with the data by minimizing the cross entropy loss and (2) encouraging the estimated annotators to be maximally unreliable by minimizing the trace of the estimated confusion matrices. Our method is also simpler to implement, only requiring an addition of a regularization term to the cross-entropy loss. Furthermore, we provide a theoretical result that such regularization is capable of recovering the annotation noise as long as the average confusion matrix (CM) over annotators is diagonally dominant. 

Experiments on image classification tasks with both simulated and real noisy labels demonstrate that our method, despite being much simpler, leads to better or comparable performance with MBEM \cite{khetan2017learning} and generalized EM \cite{raykar2009supervised,raykar2010learning}, and is capable of recovering CMs even when there is only one label available per example. We simulated a diverse range of annotator types on MNIST and CIFAR10 data sets while we used a ultrasound dataset for cardiac view classification to test the efficacy in a real-world application. We also show importance of modeling individual annotators by comparing against various modern noise-robust methods \cite{reed2014training,sukhbaatar2014training,goldberger2016training,guan2017said}, when the inter-annotator variability is high. 

\vspace{-3mm}
\paragraph{Other Related Works.}
More broadly, our work is related to methods for robust learning in the presence of label noise. There is a large body of literature that do not explicitly model individual annotators unlike our method. 

%Natarajan et al. \cite{natarajan2013learning} proposed a generic unbiased estimator of any loss function for binary classification with noisy labels and gave theoretical bounds on the performance.
The effects of label noise are well studied in common classifiers such as SVMs and logistic regression, and robust variants have been proposed \cite{frenay2014classification,natarajan2013learning,bootkrajang2012label}.  More recently, various attempts have been made to train deep neural networks under label noise.  Reed et al. \cite{reed2014training} developed a robust loss to model ``prediction consistency", which was later extended by \cite{tanaka2018joint}. %with an extra procedure to replace the original noisy labels with the model predictions.
In \cite{mnih2012learning} and \cite{sukhbaatar2014training}, label noise was parametrized in the form of a transition matrix and incorporated into neural networks for binary and multi-way classification. A more effective alternative for estimating such transition matrix was proposed in \cite{patrini2017making}, and a method for capturing image dependency of label noise was shown in \cite{Goldberger2017TrainingDN}. We will later compare our model to several of these methods to test the value of modelling individual annotators in gaining robustness to label noise. 

Multiple lines of work have shown that a small portion of clean labels improves robustness. \cite{veit2017learning} proposed to learn from clean labels to correct the labels of noisy examples. \cite{ren2018learning} proposed a method for learning to weigh examples during each training iteration by using the validation loss on clean labels as the meta-objective. \cite{jiang2018mentornet} employs a similar approach, but trains a separate network that proposes weighting. However, curating a set of clean labels of sufficient size is expensive for many applications, and this work focuses on the scenario of learning from purely noisy labels. 

% If time permits include, 
%\textbf{Learning with noisy labels (with golden data):} \cite{veit2017learning}, \cite{ren2018learning}, \cite{jiang2018mentornet}, 

%More recently, methods started utilizing additional clean labels. \cite{veit2017learning} proposed to learn to clean up the labels of noisy ex- amples. Li et al. [24] trained a network using clean data, coupled with a knowledge graph, to distill soft logits to the noisy data. Different from previous work, we focus on weighting examples in corrupted labels to train very deep CNNs from scratch. Experiments show that our model is effective even without clean labeled data.
%\textbf{Segmentation} \cite{warfield2004simultaneous,lampert2016empirical}

%%%%%%%%%%%%%%%%%%%%%%%%%%%%%%%%%% METHODS %%%%%%%%%%%%%%%%%%%%%%%%%%%%%%%%
\section{Methods}
% \textcolor{red}{Need to restructure this. Just describe how each annotator's distribution can be expressed as an expectation of the noise model with respect to the ground truth label distribution. Then, show what that looks like for a classification problem i.e. CMs, etc. Lastly, derive the likelihood function and mention that \cite{raykar2009supervised} used EM algorithm to jointly estimate the CMs and model parameters for binary classification problems.}
 
We assume that a set of images $\{\mathbf{x}_i\}_{i=1}^N$ are assigned with noisy labels $\{\tilde{y}^{(r)}_{i}\}^{r=1,...,R}_{i=1,...,N}$ from multiple annotators where $\tilde{y}^{(r)}_{i}$ denotes the label from annotator $r$ given to example $\mathbf{x}_i$, but no ground truth (GT) labels $\{y_{i}\}_{i=1,...,N}$ are available. In this work, we present a new procedure for multiclass classification problem that can simultaneously estimate the annotator noise and GT label distribution $p(y|\mathbf{x})$ from such noisy set of data $\mathcal{D} = \{\textbf{x}_i, \tilde{y}^{(1)}_{i},...,\tilde{y}^{(R)}_{i}\}_{i=1,...,N}$. The method only requires adding a regularization term, that is the average accuracy of all annotator models, to the cross-entropy loss function. Intuitively, the method biases ours models of each annotator to be as inaccurate as possible while having the model still explain the data. We will show that this is capable of decoupling the annotation noise from the true label distribution, as long as the average labels of the real annotators are ``sufficiently" correct (which we formalize in Sec.~\ref{sec:theorems}). For simplicity, we first describe the method in the \textit{dense label} scenario in which each image has labels from all annotators, and then extend to scenarios with \textit{missing} labels where only a subset of annotators label each image. As we shall see later, the method works even when each image is only labelled by a single annotator. 

% We assume that we only have access to the labels and the identity of annotators for respective label, but nothing else such as no meta-information such as expert level, reviews, etc. Based on this data of ``noisy" labels $\mathcal{D} = \{\textbf{x}_i, \tilde{y}_{i1},...,\tilde{y}_{iR}\}_i^N$, we want to train a classifier to estimate the distribution over the ``ground truth'' labels $y$ for each input image $\mathbf{x}$.  For simplicity, we stick to the case in which each image has labels from all annotators, but most of the presented ideas can be extended to scenarios with \textit{missing} labels. \textcolor{red}{(Clarify what we exactly mean my missing labels here.)}

\subsection{Noisy Observation Model}
\vspace{-1mm}
\begin{figure*}[ht]
	\center
	\vspace{-5mm}
	\includegraphics[width=0.75\linewidth]{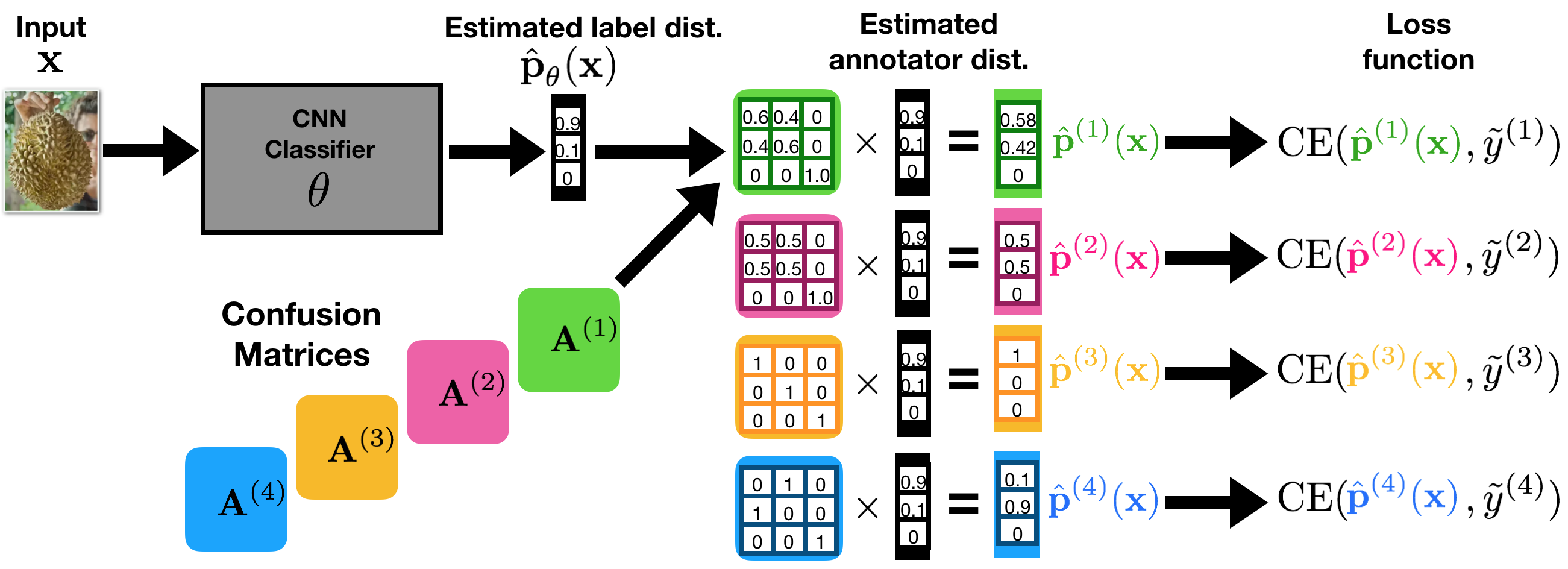}
	\small
	\vspace{-2mm}
	\caption{\small General schematic of the model (eq.~\ref{eq:model_likelihood}) in the presence of $4$ annotators. Given input image $\mathbf{x}$, the classifier parametrised by $\theta$ generates an estimate of the ground truth class probabilities, $\mathbf{p}_{\theta}(\mathbf{x})$. Then, the class probabilities of respective annotators $\mathbf{p}^{(r)}(\mathbf{x}):= \mathbf{A}^{(r)}\mathbf{p}_{\theta}(\mathbf{x})$ for $r\in \{1,2,3,4\}$ are computed. The model parameters $\{\theta, \mathbf{A}^{(1)}, \mathbf{A}^{(2)}, \mathbf{A}^{(3)}, \mathbf{A}^{(4)}\}$ are optimized to minimize the sum of four cross-entropy losses between each estimated annotator distribution $\textbf{p}^{(r)}(\mathbf{x})$ and the noisy labels $\tilde{y}^{(r)}$ observed from each annotator. The probability that each annotator provides accurate labels can be estimated by taking the average diagonal elements of the associated confusion matrix (CM), which we refer to as the ``skill level" of the annotator.}
    \label{fig:schematic}
    \vspace{-4mm}
\end{figure*}

We first describe our probabilistic model of the observed noisy labels from multiple annotators. In particular, we make two key assumptions: (1) annotators are statistically independent, (2) annotation noise is independent of the input image. By assumption (1), the probability of observing noisy labels $\{\tilde{y}^{(1)},...,\tilde{y}^{(R)}\}$ on image  $\textbf{x}$ can be written as:   
\begin{equation}\label{eq:general_likelihood}
p( \tilde{y}^{(1)},...,\tilde{y}^{(R)}|\textbf{x}) =  \prod_{r=1}^{R} \int_{y \in \mathcal{Y}}  p(\tilde{y}^{(r)}|y, \textbf{x}) \cdot p(y|\textbf{x}) dy
\end{equation}
where $p(y|\textbf{x})$ denotes the true label distribution of the image, and $ p(\tilde{y}^{(r)}|y, \textbf{x})$ describes the noise model by which annotator $r$ corrupts the ground truth label $y$. For classification problems, the label $y$ takes a discrete value in $\mathcal{Y}=\{1,...,L\}$.  From assumption (2), the probability that annotator  $r$ corrupts the GT label $y=i$ to $\tilde{y}^{(r)} = j$  is independent of the image $\mathbf{x}$ i.e. $ p(\tilde{y}^{(r)}=j|y=i, \textbf{x}) =  p(\tilde{y}^{(r)}=j|y=i) =: a^{(r)}_{ji} $. Here we refer to the associated $L \times L$ transition matrix $\mathbf{A}^{(r)} = (a^{(r)}_{ji}) $ as the \textit{confusion matrix} (CM) of annotator $r$. The joint probability over the noisy labels is simplified to: 
\begin{equation}\label{eq:model_likelihood}
p( \tilde{y}^{(1)},...,\tilde{y}^{(R)}|\textbf{x}) =  \prod_{r=1}^{R} \sum_{y = 1}^{L}  a^{(r)}_{\tilde{y}^{(r)}, y} \cdot p(y|\textbf{x})
\end{equation}

Fig.~\ref{fig:schematic} provides a schematic of our overall architecture, which models the different constituents in the above joint probability distribution. In particular, the model consists of two components: the \textit{base classifier} which estimates the ground truth class probability vector $\hat{\mathbf{p}}_{\theta}(\mathbf{x})$ whose $i^\text{th}$ element approximates $p(y=i|\textbf{x})$, and the set of the CM estimators $\{\hat{\mathbf{A}}^{(r)}\}_{r=1}^R$ which approximate $\{\mathbf{A}^{(r)}\}_{r=1}^R$. Each product $\hat{\mathbf{p}}^{(r)}(\mathbf{x}) :=\hat{\mathbf{A}}^{(r)}\hat{\mathbf{p}}_{\theta}(\mathbf{x})$ represents the estimated class probability vector of the corresponding annotator. At inference time, we use the most confident class in $\hat{\mathbf{p}}_{\theta}(\mathbf{x})$ as the final classification output. Next, we describe our optimization algorithm for jointly learning the parameters of the base classifier, $\theta$ and the CMs, $\{\hat{\mathbf{A}}^{(r)}\}_{r=1}^R$.

\subsection{Joint Estimation of Confusion and True labels}
\begin{figure*}[h]
	\vspace{-5mm}
    \begin{center}
		%\framebox[4.0in]{$\;$}
		\includegraphics[width=0.8\linewidth]{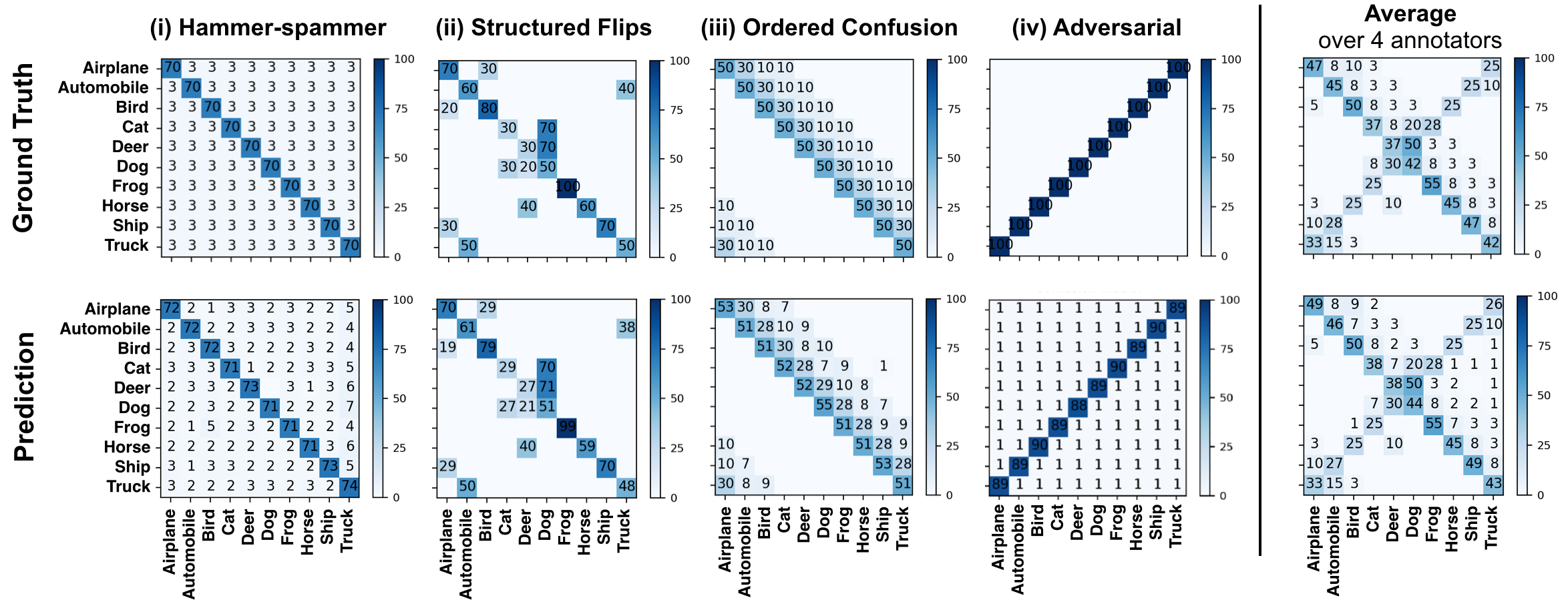}
	\end{center}
	\vspace{-6mm}
	\caption{\small A diverse set of 4 simulated annotators on CIFAR-10. The top row shows the ground truths while the bottom row are the estimation from our method, trained with only one label per image. }
% 	\begin{center}
% 		%\framebox[4.0in]{$\;$}
% 		\includegraphics[width=\linewidth]{figures/fig_1_diverse_annotators_cms.png}
% 	\end{center}
% 	\caption{A demonstration of recovering CMs of 4 different annotators with our method. The first row shows the ground truths while the second and the third rows are the estimation from our method, trained with the full set of labels and only one label per image, respectively. \textcolor{red}{Hopefully we can replace this figure with the one on CIFAR-10 just to show a range of results from different datasets.}}
	 \label{fig:recover_cms}
	 \vspace{-3mm}
\end{figure*}

%\textcolor{red}{(I do understand the flow here but I am unsure if a neutral reader will. A better thing would be to start by introducing the notation i.e. the initial part of Sec 3.2, then state trace regularization followed by the loss function. The lemma and theorem could be provided in the next subsection so that they do not obstruct the general flow. Interested ones can dig into it; others can proceed without. Note the comment in Sec 3.2)}
Given training inputs $\mathbf{X} = \{\mathbf{x}_i\}_{i=1}^{N}$ and noisy labels $\widetilde{\mathbf{Y}}^{(r)} = \{ \tilde{y}^{(r)}_{i}\}_{i=1}^{N}$ for $r=1,...,R$, we optimize the parameters $\{\theta, \hat{\mathbf{A}}^{(r)}\}$ by minimizing the negative log-likelihood (NLL), $- \text{log } p(\widetilde{\mathbf{Y}}^{(1)}, ..., \widetilde{\mathbf{Y}}^{(R)}|\mathbf{X})$. From eq.~\ref{eq:model_likelihood}, this optimization objective equates to the sum of cross-entropy losses between the observed labels and the estimated annotator label distributions:
\begin{equation}\label{eq:cross_entropy}
- \text{log } p(\widetilde{\mathbf{Y}}^{(1)}, ..., \widetilde{\mathbf{Y}}^{(R)}|\mathbf{X}) = \sum_{i=1}^{N}\sum_{r=1}^{R}\text{CE}(\textbf{A}^{(r)}\hat{\textbf{p}}_{\theta}(\mathbf{x}_i), \tilde{y}^{(r)}_{i}).
\end{equation}
Minimizing above encourages each annotator-specific prediction $\hat{\mathbf{p}}^{(r)}(\mathbf{x}) :=\hat{\textbf{A}}^{(r)}\hat{\textbf{p}}_{\theta}(\mathbf{x})$ to be as close as possible to the noisy label distribution of the corresponding annotator $\textbf{p}^{(r)}(\mathbf{x})$. However, this loss function alone is not capable of separating the annotation noise from the true label distribution; there are infinite combinations of $\{\hat{\mathbf{A}}^{(r)}\}_{r=1}^R$ and classification model $\hat{\mathbf{p}}_{\theta}$ such that $\hat{\mathbf{p}}^{(r)}$ perfectly matches the annotator's label distribution $\mathbf{p}^{(r)}$ for any input $\mathbf{x}$. 

To formalize this problem, we denote the CM of the estimated true label distribution\footnote{$\textbf{P}_{ji} = \int_{\mathbf{x} \in \mathcal{X}}p(\text{argmax}_{k}[\hat{\textbf{p}}_{\theta}(\mathbf{x})]_{k}=j|y=i)p(\mathbf{x})d\mathbf{x}$} $\hat{\textbf{p}}_{\theta}$ by $\mathbf{P}$. The CM of the estimated annotator's label distribution $\hat{\mathbf{p}}^{(r)}$ is then given by the product $\hat{\textbf{A}}^{(r)}\textbf{P}$. Minimizing the cross-entropy loss (eq.~\ref{eq:cross_entropy}) encourages $\hat{\textbf{A}}^{(r)}\textbf{P}$ to converge to the true CM of the corresponding annotator $\textbf{A}^{(r)}$ i.e. $\hat{\textbf{A}}^{(r)}\textbf{P}\rightarrow \textbf{A}^{(r)}$. However, there are infinitely many solutions pairs $(\hat{\textbf{A}}^{(r)}$, $\textbf{P})$ that satisfy the equality $\hat{\textbf{A}}^{(r)}\textbf{P}= \textbf{A}^{(r)}$. This means that we need to regularize the optimization to encourage convergence to the desired solutions i.e. $\hat{\textbf{A}}^{(r)}\rightarrow\textbf{A}^{(r)}$ and $\textbf{P}\rightarrow \mathbf{I}$. 

To combat this problem, we propose to add the trace of the estimated CMs to the loss in eq.~\ref{eq:cross_entropy}. Extending to the ``missing labels" regime in which only a subset of annotators label each example, we derive the combined loss:
\begin{equation}\label{eq:objective_sparse}
\sum_{i=1}^{N}\sum_{r=1}^{R}\mathbbm{1}(\tilde{y}_{i}^{(r)} \in \mathcal{S}(\mathbf{x}_i))\cdot \text{CE}(\hat{\mathbf{A}}^{(r)}\hat{\textbf{p}}_{\theta}(\mathbf{x}_i), \tilde{y}^{(r)}_{i})  + \lambda \sum_{r=1}^{R}\text{tr}(\hat{\textbf{A}}^{(r)})
\end{equation}
where $\mathcal{S}(\mathbf{x})$ denotes the set of all labels available for image $\mathbf{x}$, and $\text{tr}(\mathbf{A})$ denotes the trace of matrix $\mathbf{A}$. We simply perform gradient descent on this loss to learn $\{\theta, \hat{\mathbf{A}}^{(1)}, ..., \hat{\mathbf{A}}^{(R)}\}$.
%\begin{equation}
%- \text{log } p(\tilde{\mathbf{Y}}^{(1)}, ..., \tilde{\mathbf{Y}}^{(R)}|\mathbf{X}) \propto \sum_{i=1}^{N}\sum_{r=1}^{R}\mathcal{L}(\textbf{A}^{(r)}\hat{\textbf{p}}_{\theta}(\mathbf{x}_i), \tilde{y}_{ir}) + \gamma\cdot \sum_{r=1}^{R}trace(\textbf{A}^{(r)})
%\end{equation}

Numerous previous work have considered the same observation model, but proposed various optimization schemes. The original work \cite{raykar2009supervised,raykar2010learning} employed the generalized EM algorithm to estimate $\{\theta, \hat{\mathbf{A}}^{(1)}, ..., \hat{\mathbf{A}}^{(R)}\}$, and more recent work  \cite{branson2017lean,van2018lean} employed variants of hard-EM to optimize the same model. Khetan et al.,\cite{khetan2017learning} proposed a method called model-bootstrapped EM (MBEM) in which the predictions of the base neural network classifier are used in the M-step update of CMs to learn from singly labelled data, which was not viable with the prior work. However, in all of the above EM-based methods, each M-step for the parameters of NN is not available in closed form and thus performed via gradient descent. This means that every M-step requires a training of the CNN classifier, rendering each iteration of EM expensive. A naive solution to this is to perform only few iterations of gradient descent in each E-step, however, this could limit the performance if sufficient convergence is not achieved. Our approach directly maximizes the likelihood with the trace regularizer and does not suffer from these issues. In Sec.~4, we show empirically this approach leads to an improvement both in terms of accuracy and convergence rate over the previous methods on noisy labels with high inter-annotator variability. %In the next section, we provide a rationale for adding the trace term to the loss function and how this helps separate annotation noise from the ground truth label distribution.

\subsection{Motivation for Trace Regularization}\label{sec:theorems}
Here we intend to motivate the addition of the trace regularizer in eq.~\ref{eq:objective_sparse}. In the last section, we saw that minimizing cross-entropy loss alone encourages $\hat{\textbf{A}}^{(r)}\textbf{P}\rightarrow \textbf{A}^{(r)}$. Therefore, if we could devise a regularizer which, when minimized, uniquely ensures the convergence $\hat{\textbf{A}}^{(r)}\rightarrow\textbf{A}^{(r)}$, then this would make $\textbf{P}$ tend to the identity matrix, implying that the base model fully captures the true label distribution i.e. $\text{argmax}_{k}[\hat{\textbf{p}(\mathbf{x})}_{\theta}]_{k} =  y\,\, \forall \mathbf{x}$. We describe below the trace regularizer is indeed a such regularizer when both $\hat{\mathbf{A}}^{(r)}$ and $\mathbf{A}^{(r)}$ satisfy some conditions. We first show this result assuming that there is a single annotator, and then extend to the scenario with multiple annotators.  

\vspace{1mm}
\begin{lemma}[Single Annotator]\label{lemma:1}
Let $\textbf{P}$ be the CM of the estimated true labels $\hat{\textbf{p}}_{\theta}$ and $\hat{\textbf{A}}$ be the estimated CM of the annotator. If the model matches the noisy label distribution of the annotator i.e. $\hat{\textbf{A}}\textbf{P}=\textbf{A} $, and both $\hat{\textbf{A}}$ and $\textbf{A}$ are diagonally dominant ($a_{ii} > a_{ij}$, $\hat{a}_{ii} > \hat{a}_{ij}$) for all $i \neq j$, then $\hat{\textbf{A}}$ with the minimal trace uniquely coincides with the true $\textbf{A}$. % \textcolor{green}{This last sentence is a bit confusing to a first time reader; We are saying we want diagonal dominant matrices with minimal trace.}. %(\textcolor{red}{diagonally dominant matrices are non-singular by the Levy-Desplanques theorem, thus, sufficient to impose this condition?})
\end{lemma}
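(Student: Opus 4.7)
The plan is to view the problem as a constrained minimization: choose the pair $(\hat{\mathbf{A}},\mathbf{P})$ so as to minimize $\mathrm{tr}(\hat{\mathbf{A}})$ subject to the equality $\hat{\mathbf{A}}\mathbf{P}=\mathbf{A}$, the column-stochasticity of $\mathbf{P}$ (inherited from its probabilistic definition, since $\sum_k p_{ki}=1$ for every true class $i$), and the diagonal dominance of $\hat{\mathbf{A}}$. Observe that $(\hat{\mathbf{A}},\mathbf{P})=(\mathbf{A},\mathbf{I})$ is admissible by the diagonal-dominance hypothesis on $\mathbf{A}$. I would therefore aim to show $\mathrm{tr}(\hat{\mathbf{A}})\geq \mathrm{tr}(\mathbf{A})$ on the feasible set, with strict inequality unless $\hat{\mathbf{A}}=\mathbf{A}$.

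The central computation exploits the cyclic identity $\mathrm{tr}(\mathbf{A}) = \mathrm{tr}(\hat{\mathbf{A}}\mathbf{P}) = \sum_{i}\sum_{k}\hat a_{ik}\,p_{ki}$. Splitting off the diagonal contribution $k=i$ and using column-stochasticity of $\mathbf{P}$ in the form $1 - p_{ii} = \sum_{k\neq i} p_{ki}$, I expect to reach
\begin{equation*}
\mathrm{tr}(\hat{\mathbf{A}})-\mathrm{tr}(\mathbf{A}) \;=\; \sum_{i}\sum_{k\neq i}(\hat a_{ii} - \hat a_{ik})\,p_{ki}.
\end{equation*}
Row-wise diagonal dominance of $\hat{\mathbf{A}}$ makes each coefficient $\hat a_{ii} - \hat a_{ik}$ strictly positive, while $p_{ki}\geq 0$ because $\mathbf{P}$ is stochastic; every term on the right is therefore non-negative and the desired inequality follows.

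For uniqueness, equality in the display above forces $p_{ki}=0$ for every $i$ and every $k\neq i$, precisely because the coefficients are \emph{strictly} positive. Column-stochasticity of $\mathbf{P}$ then upgrades this to $p_{ii}=1$ for all $i$, i.e.\ $\mathbf{P}=\mathbf{I}$, and substituting back into $\hat{\mathbf{A}}\mathbf{P}=\mathbf{A}$ yields $\hat{\mathbf{A}}=\mathbf{A}$ as required. The main subtlety I anticipate is purely bookkeeping: one must read the hypothesis $a_{ii}>a_{ij}$ as dominance within each row, since this is exactly the variant that pairs with the column-stochasticity of $\mathbf{P}$ used above. Note also that the diagonal dominance of $\mathbf{A}$ itself is not needed in the inequality chain; it only serves to guarantee that the target $\hat{\mathbf{A}}=\mathbf{A}$ lies in the feasible set of admissible estimators so that the minimum is actually attained there.
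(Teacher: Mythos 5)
Your proof is correct and follows essentially the same route as the paper's: both rest on expanding $\mathrm{tr}(\hat{\mathbf{A}}\mathbf{P})$, using column-stochasticity of $\mathbf{P}$ together with row-wise diagonal dominance of $\hat{\mathbf{A}}$ to get $\mathrm{tr}(\hat{\mathbf{A}})\geq\mathrm{tr}(\mathbf{A})$, and then forcing $p_{ki}=0$ for $k\neq i$ (hence $\mathbf{P}=\mathbf{I}$ and $\hat{\mathbf{A}}=\mathbf{A}$) in the equality case. The only difference is cosmetic: the paper proves the element-wise bound $a_{ii}\leq\hat{a}_{ii}$ and sums, whereas you work with the trace difference in one aggregated identity; your closing remark that diagonal dominance of $\mathbf{A}$ only guarantees feasibility of the target solution is a fair observation that the paper leaves implicit.
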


\begin{proof}
We show that each diagonal element in the true CM $\textbf{A}$ forms a lower bound to the corresponding element in its estimation. 
% \begin{equation}
%     tr(\textbf{A}) = tr(\hat{\textbf{A}}\textbf{P}) = \sum_{i,j} \hat{a}_{ij}p_{ji} \leq \sum_{i,j}\hat{a}_{ii}p_{ji} = \sum_{i}\hat{a_{ii}}(\sum_{j}p_{ji}) = \sum_{i}\hat{a}_{ii} = tr(\hat{\textbf{A}})
% \end{equation}
\vspace{-1mm}
\begin{equation}\label{eq:side1}
    a_{ii} = \sum_{j} \hat{a}_{ij}p_{ji} \leq \sum_{j}\hat{a}_{ii}p_{ji} = \hat{a_{ii}}(\sum_{j}p_{ji}) = \hat{a}_{ii} 
\vspace{-2mm}
\end{equation}
for all $i\in\{1,...,L\}$. It therefore follows that $\text{tr}(\textbf{A})\leq \text{tr}(\hat{\textbf{A}})$. We now show that the equality $\hat{\textbf{A}} = \textbf{A}$ is uniquely achieved when the trace is the smallest i.e. $\text{tr}(\textbf{A})= \text{tr}(\hat{\textbf{A}})	\Rightarrow \textbf{A} = \hat{\textbf{A}}$. From \eqref{eq:side1}, if the trace of $\textbf{A}$ and $\hat{\textbf{A}}$ are the same, we see that their diagonal elements also match i.e. $ a_{ii} = \hat{a}_{ii} \forall i\in\{1,...,L\}$. Now, the non-negativity of all elements in CMs $\textbf{P}$ and $\hat{\textbf{A}}$, and the equality $a_{ii} = \sum_{j} \hat{a}_{ij}p_{ji}$ imply that $p_{ji} = \mathbbm{1}[i=j]$ i.e. $\textbf{P}$ is the identity matrix.%, which completes the proof.

% We now show that when the trace is the smallest i.e. $tr(\textbf{A})= tr(\hat{\textbf{A}})$, then we have $\hat{\textbf{A}} = \textbf{A}$. Since the diagonal elements of the annotator's CM are larger than the off-diagonal ones, it follows that

% \begin{equation}\label{eq:side1}
%     tr(\hat{\textbf{A}}) = tr(\textbf{A}) = \sum_{i}a_{ii} \sum_{i,j}a_{ii} p_{ji} \geq \sum_{i,j}a_{ij} p_{ji}
% \end{equation}

% Also, since $\hat{\textbf{A}}\textbf{P}=\textbf{A}$, we also have
% \begin{equation}\label{eq:side2}
% tr(\hat{\textbf{A}}) = tr(\textbf{A}) = tr(\hat{\textbf{A}}\textbf{P}) = \sum_{i,j} \hat{a}_{ij}p_{ji}.
% \end{equation}
% By eq. \ref{eq:side1} and \ref{eq:side2}, we obtain the inequality $\sum_{ij}(\hat{a}_{ij}-a_{ij})p_{ji}\geq 0$. As $p_{ij}\geq 0$ for all $i,j$, \textcolor{red}{we can derive that elements of estimated CMs }
\end{proof}
\vspace{-2mm}
We note that the above result was also mentioned in \cite{sukhbaatar2014training} in a more general context of label noise modelling (that neglects annotator information). Here we further augment their proof by showing the uniqueness of solutions (i.e.  $\text{tr}(\textbf{A})= \text{tr}(\hat{\textbf{A}})	\Rightarrow \textbf{A} = \hat{\textbf{A}}$). In addition, the trace regularization was never used in practice in \cite{sukhbaatar2014training} --- for implementation reason, the Frobenius norm was used in all their experiments.  We now extend this to the multiple annotator regime. We will show later that minimizing the mean trace of all annotators indeed enhances the estimation quality of both CM and true label distributions, particularly in the presence of high annotator disagreement. 

\begin{theorem}[Multiple Annotators]\label{theorem:main}
Let $\hat{\textbf{A}}^{(r)}$ be the estimated CM of annotator $r$. If $\hat{\textbf{A}}^{(r)}\textbf{P}=\textbf{A}^{(r)}$ for $r=1,...,R$, and the average true and estimated CMs $\textbf{A}^{*}:=R^{-1}\sum_{r=1}^R\textbf{A}^{(r)}$ and $\hat{\textbf{A}}^{*}:=R^{-1}\sum_{r=1}^R\hat{\textbf{A}}^{(r)}$ are diagonally dominant, then  $\textbf{A}^{(1)}, ..., \textbf{A}^{(R)} = \text{argmin }_{\hat{\textbf{A}}^{(1)}, ..., \hat{\textbf{A}}^{(R)}}\Big{[}\text{tr}(\hat{\textbf{A}}^{*})\Big{]}$ and such solutions are unique. In other words, when the trace of the mean CM is minimized, the estimation of respective annotator's CMs match the true values. 
\end{theorem}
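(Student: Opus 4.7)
The plan is to reduce the multi-annotator statement to Lemma~\ref{lemma:1} by exploiting the linearity of both the trace functional and the per-annotator constraint $\hat{\mathbf{A}}^{(r)}\mathbf{P}=\mathbf{A}^{(r)}$. First, I would average the $R$ constraints, noting that the common factor $\mathbf{P}$ commutes out of the sum, to obtain the single identity $\hat{\mathbf{A}}^{*}\mathbf{P}=\mathbf{A}^{*}$. This replaces an $R$-tuple of matrix equations with one equation on the averaged CMs, at which point the problem has exactly the shape of the single-annotator setting.

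Next, I would apply Lemma~\ref{lemma:1} to the pair $(\hat{\mathbf{A}}^{*},\mathbf{A}^{*})$. The hypotheses are met by assumption: both averages are diagonally dominant and they satisfy $\hat{\mathbf{A}}^{*}\mathbf{P}=\mathbf{A}^{*}$. The lemma then yields simultaneously the inequality $\mathrm{tr}(\mathbf{A}^{*})\leq \mathrm{tr}(\hat{\mathbf{A}}^{*})$ and its uniqueness statement, namely that equality forces $\hat{\mathbf{A}}^{*}=\mathbf{A}^{*}$ together with $\mathbf{P}=\mathbf{I}$. Since $\mathrm{tr}(\hat{\mathbf{A}}^{*})=R^{-1}\sum_r \mathrm{tr}(\hat{\mathbf{A}}^{(r)})$, this already shows that the true tuple $(\mathbf{A}^{(1)},\dots,\mathbf{A}^{(R)})$ achieves the minimum of the averaged trace among all admissible estimates.

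The final step is to upgrade this averaged identification to a per-annotator one, which is where the uniqueness claim really lives. Once Lemma~\ref{lemma:1} has delivered $\mathbf{P}=\mathbf{I}$, I would substitute back into each original constraint $\hat{\mathbf{A}}^{(r)}\mathbf{P}=\mathbf{A}^{(r)}$, which collapses to $\hat{\mathbf{A}}^{(r)}=\mathbf{A}^{(r)}$ for every $r$. Hence any minimizing $R$-tuple must coincide with the true CMs component-wise.

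The main obstacle is conceptual rather than computational: Lemma~\ref{lemma:1}, applied naively to individual pairs $(\hat{\mathbf{A}}^{(r)},\mathbf{A}^{(r)})$, would require each $\mathbf{A}^{(r)}$ (and each estimate) to be diagonally dominant, which is \emph{stronger} than what the theorem assumes. The trick is that we never apply the lemma to individuals at all — we apply it only to the averages, using linearity of the constraint to make a single deduction about $\mathbf{P}$, and only then use the per-annotator equations (which are free of any diagonal-dominance assumption on the individual matrices) to pin down each $\hat{\mathbf{A}}^{(r)}$. Organising the argument in this order is the key to keeping the hypotheses minimal while still recovering a sharp per-annotator uniqueness conclusion.
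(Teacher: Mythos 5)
Your proposal is correct and follows essentially the same route as the paper: average the per-annotator constraints to get $\hat{\textbf{A}}^{*}\textbf{P}=\textbf{A}^{*}$, apply Lemma~\ref{lemma:1} to the averaged matrices to force $\textbf{P}=\mathbf{I}$ at the trace minimum, and then substitute back into each constraint to conclude $\hat{\textbf{A}}^{(r)}=\textbf{A}^{(r)}$ for every $r$. Your closing remark — that diagonal dominance is only needed for the averages, not the individual CMs — is precisely the point the paper's argument relies on as well.
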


\begin{proof}
As the average CMs $\textbf{A}^{*}$ and $\hat{\textbf{A}}^{*}$ are diagonally dominant and we have $\textbf{A}^{*}=\hat{\textbf{A}}^{*}\textbf{P}$, Lemma~\ref{lemma:1} yields that $\text{tr}(\textbf{A}^{*}) \leq \text{tr}(\hat{\textbf{A}}^{*}) $ with equality if and only if $\textbf{A}^{*} = \hat{\textbf{A}}^{*}$. Therefore, when the trace of the average CM of annotators is minimized i.e. $ \text{tr}(\hat{\textbf{A}}^{*})=\text{tr}(\textbf{A}^{*}) $, the estimated CM of the true label distribution $\textbf{P}$ reduces to identity, giving $\hat{\textbf{A}}^{(r)}= \textbf{A}^{(r)}$ for all $r \in \{1,...,R\}$. 
\end{proof}
\vspace{-2mm}
The above result shows that if each estimated annotator's distribution $\hat{\textbf{A}}^{(r)}\hat{\textbf{p}}_{\theta}(\mathbf{x})$ is very close to the true noisy distribution $\textbf{p}^{(r)}(\mathbf{x})$ (which is encouraged by minimizing the cross-entropy loss), and on average for each class $c$, the number of correctly labelled examples exceeds the number of examples of every other class $c'$ that are mislabelled as $c$ (the mean CM is diagonally dominant), then minimizing its trace will drive the estimates of CMs towards the true values. To encourage $\{\hat{\mathbf{A}}^{(1)}, ..., \hat{\mathbf{A}}^{(R)}\}$ to be also diagonally dominant, we initialize them with identity matrices. Intuitively, the combination of the trace term and cross-entropy separates the true distribution from the annotation noise by finding the maximal amount of confusion which can explain the noisy observations well. 

%%%% DO NOT DELETE %%%%%
%%%%%%%%%% This could be an effective extension %%%%%%%%%%%%%%%
%\textcolor{red}{From Theorem~\ref{theorem:main}, we should modify the objective eq.~\ref{eq:objective_sparse}, so we impose the diagonal dominance of the average estimated CM $\hat{\textbf{A}}^{*}$ during training. For example, it may make sense to add Lagrange multipliers?:
%\begin{multline}\label{eq:objective_sparse_diagonal}
%\sum_{i=1}^{N}\sum_{r=1}^{R}\mathbbm{1}[\tilde{y}_{ir} \in \mathcal{Y}(\mathbf{x}_i)]\cdot \mathcal{L}(\textbf{A}^{(r)}\hat{\textbf{p}}_{\theta}(\mathbf{x}_i), \tilde{y}_{ir}) + \\ 
%\gamma\cdot \sum_{r=1}^{R}trace(\textbf{A}^{(r)}) + \sum_{l=1}^L\sum_{k\neq l}\lambda_{lk}\cdot(\hat{a}^{*}_{lk} - \hat{a}^{*}_{ll})
%\end{multline}
%Is there a more natural way to solve this constrained optimization problem ?
%}

%%%%%%%%%%%%%%%%%%%%%% Evaluation %%%%%%%%%%%%%%%%%%%%%%%%%\newpa
\section{Experiments}
% In this section, we compare our method against a set of reasonable baselines. We show the benefits of applying our model in both synthetic and real settings. Particularly, we demonstrate 1) advantage of having a simpler optimization scheme compared to EM, 2) importance of modeling multiple annotators and 3) the applicability of the model in a challenging real world application.

We now aim to verify the proposed method on various image recognition tasks. Particularly, we demonstrate (1) advantage of our simpler optimization scheme compared to EM-based approaches (Sec.~\ref{sec:comparison_with_em}), (2) importance of modeling multiple annotators (Sec.~\ref{sec:other_baselines}) and (3) the applicability of the model in a challenging real world application (Sec.~\ref{sec:us_experiments}). We address the first two questions by testing the proposed method on MNIST and CIFAR-10 datasets with a diverse set of simulated annotators. To answer the final question, we evaluate our approach on the task of cardiac view classification using ultrasound images where the labels are noisy and sparse, and are acquired from multiple annotators of varying levels of expertise. 

% We now aim to verify the proposed method on various image recognition tasks. We seek to answer the following questions: (1) is our method capable of learning CMs of multiple annotators and true label distribution simultaneously (Sec.~\ref{sec:comparison_with_em})? (2) Does it improve the accuracy of deep CNNs on noisy labels (Sec.~\ref{sec:other_baselines})?
% (3) Does it provide any tangible benefits on real-world noisy labels (Sec.~\ref{sec:us_experiments})?  

% We aim to address the first two questions by testing the proposed method on MNIST and CIFAR-10 datasets with a diverse set of simulated annotators. To answer the final question, we evaluate our approach on the task of cardiac view classification using ultrasound images where the labels are noisy and sparse, and are acquired from multiple annotators of varying levels of expertise. 

\subsection{Set-Up}
We focus on a regime in which models have only access to noisy labels from multiple annotators. For MNIST and CIFAR-10 data sets, we simulate noisy labels from a range of annotators with different skill levels and biases.

\paragraph{MNIST Experiments.} We consider two different models of annotator types: (i) \textit{pairwise-flipper}: each annotator is correct with probability $p$ or flips the label of each class to another label (the flipping target is chosen uniformly at random for each class), (ii) \textit{hammer-spammer}: each annotator is always correct with probability $p$ or otherwise chooses labels uniformly at random \cite{khetan2017learning}. For each annotator type and skill level $p$, we create a group of $5$ annotators by generating CMs from the associated distribution (illustration of CMs are given in the supplementary material). Given the GT labels, we generate noisy labels as defined by the CM per annotator. These noisy labels are used during training.

% For each annotator, we perturb the mean skill level by injecting a small Gaussian noise and chooses the flipping target class randomly in the case of a \textit{pairwise-flipper}. \textcolor{red}{More details and a set of example CMs for both cases are given in the supplementary material.} Given the ground truth class labels, we synthesize noisy labels from each annotator according to its CM, and use them as the training data to learn the CM model eq.~\ref{eq:model_likelihood}. It should be noted that in all cases the algorithms have only access to the noisy labels and the identity of annotators that generate them i.e. which expert produced which label, but not the CMs. 

\paragraph{CIFAR-10 Experiments.} We consider a diverse group of $4$ annotators with different patterns of CMs as shown in Fig.~\ref{fig:recover_cms}: (i) is a ``hammer-spammer" as defined above, (ii) tends to mix up semantically similar categories of images e.g. cats and dogs, and automobiles and trucks, (iii) is likely to confuse ``neighbouring" classes and (iv) is an adversarial annotator who has a wrong association of class names to object categories. On average, labels generated by these annotators are correct only $45\%$ of the time.

In synthetic experiments, we assume that equal number of labels are generated by each annotator on average. We also note that all models are trained on noisy labels and do not have access to the ground truth. Unless otherwise stated, we hold out $10\%$ of training images as a validation set, on which the best performing model is selected. We also perform no data augmentation during training. Full details of training and model architectures are provided in the supplementary material. In Sec.~\ref{sec:comparison_with_em} and Sec.~\ref{sec:other_baselines} below, we compare our model against two separate sets of baselines to address different questions. 

\begin{figure}[ht]
	\center
% 	\vspace{-2mm}
	\begin{subfigure}[]{1.0\linewidth}
		\includegraphics[width=\linewidth]{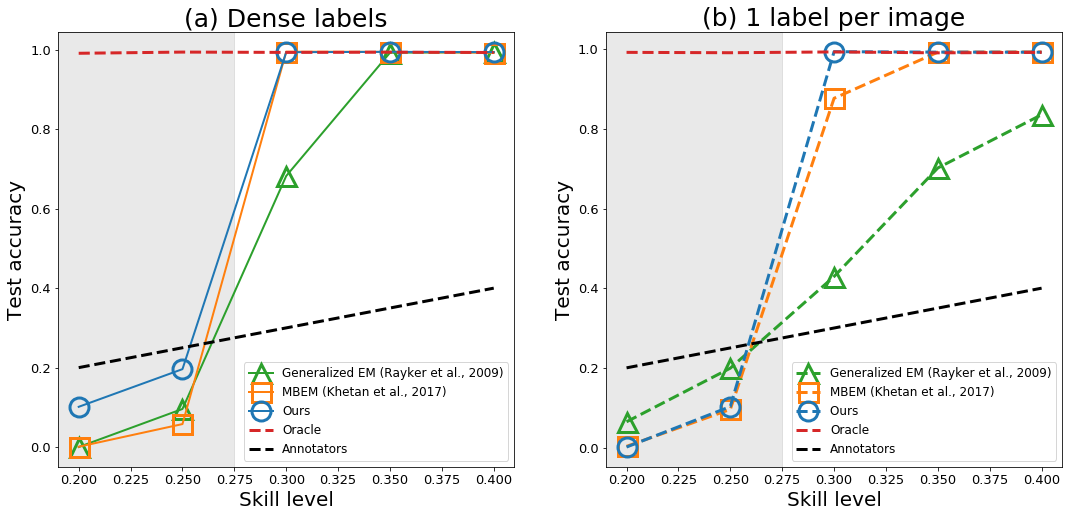}
	\end{subfigure}
	\hspace{10mm}
	\begin{subfigure}[]{1.0\linewidth}
		\includegraphics[width=\linewidth]{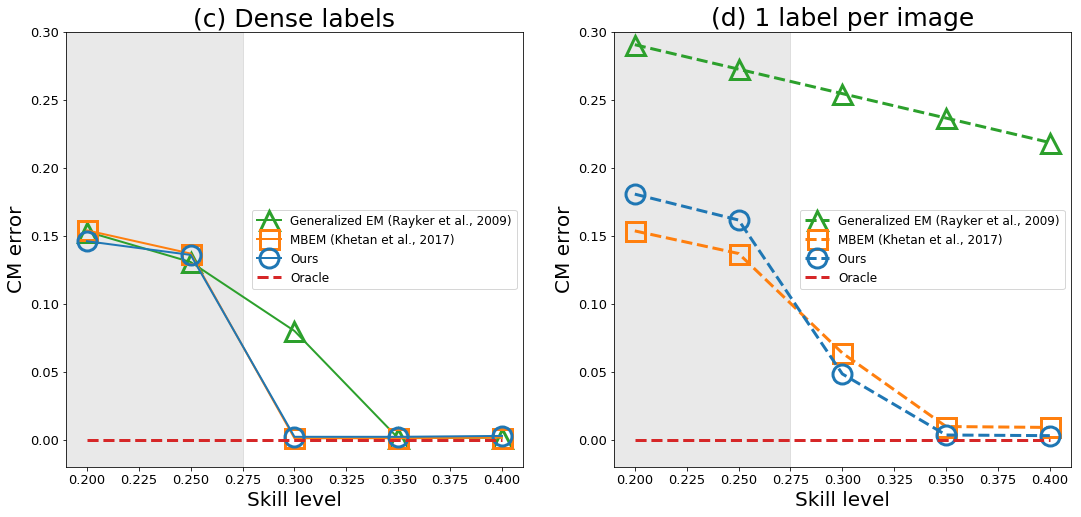}
	\end{subfigure}
	\caption{\small Comparison between our method, generalized EM, MBEM trained on noisy labels on MNIST from ``pairwise flippers" for a range of mean skill level $p$. (a), (b) show classification accuracy in two cases, one where all annotators label each example and the other where only one label is available per example. (c), (d) quantify the CM recovery error as the annotator-wise average of the normalized Frobenius norm between each ground truth CM and its estimate. The shaded areas represent the cases where the average CM over the annotators are not diagonally dominant. }
	\label{fig:pairwise_flips}
	\vspace{-2mm}
\end{figure}
%%%%%%%%%%%
\begin{figure}[t]
	\center
	\vspace{-2mm}
	\includegraphics[width=\linewidth]{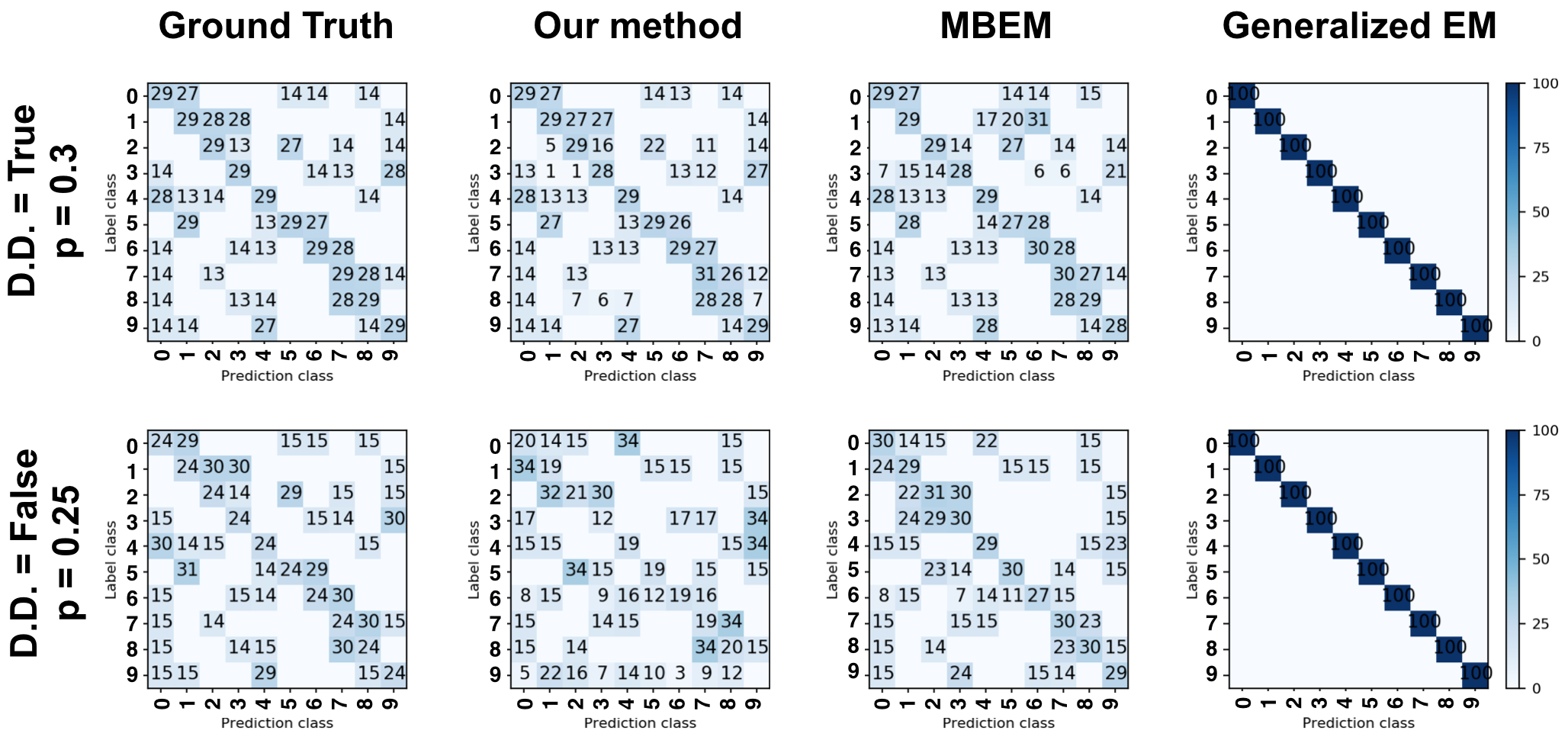}
	\vspace{-7mm}
	\caption{\small Visualization of the mean CM estimates when the diagonal dominance (D.D.) holds (mean skill level, $p=0.3$) and does not hold ($p=0.25$). In all cases, only one label is provided per image. The numbers are rounded to nearest integers. Here the respective models are trained on the noisy labels from $5$ ``pairwise flippers". Note that when each image receives only 1 label, the generalised EM \cite{raykar2009supervised} completely fails to recover the CM due to the failure of M-step for updating the confusion matrices (see Algorithm. 2 in the supplementary material). }
	\label{fig:diagonal_dominance_violation}
	\vspace{-2mm}
\end{figure}

\subsection{Comparing with EM-based Approaches} \label{sec:comparison_with_em}
This section examines the ability of our method in learning the CMs of annotators and the GT label distribution on MNIST and CIFAR-10. In particular, we compare against two prior methods: (1) generalized EM \cite{raykar2010learning}, the first method for end-to-end training of the CM model in the presence of multiple annotators, and (2) Model Bootstrapped EM (MBEM) \cite{khetan2017learning}, the present state-of-the-art method. We analyze the performance in two cases, one in which all labels from $5$ annotators are available for each image (``dense labels"), and another where only one randomly selected annotator labels each example (``1 label per image"). We quantify the error of CM estimation by the average Frobenius norm between each CM and its estimate over the annotators, and this metric is normalized to be in the range $[0,1]$ by dividing by the number of classes $L$ i.e. $R^{-1}L^{-1}\sum_{r}\sum_{i,j}||a^{(r)}_{ij}-\hat{a}^{(r)}_{ij}||^2$.

%\textcolor{red}{Where should I describe the model details and experiment set-ups? May be have another section or put in the appendix if short of space?.}
\paragraph{Performance Comparison.} Fig.~\ref{fig:pairwise_flips} compares the classification accuracy and the error of CM estimation on MNIST for a range of mean skill-levels $p$ where labels are generated by a group of 5 ``pairwise-flippers". The ``oracle" model is the idealistic scenario where CMs of the annotators are a priori known to the model while ``annotators" indicate the average labeling accuracy of each annotator group.

Fig.~\ref{fig:pairwise_flips} shows a strong correlation between the classification accuracy and the error of CM estimation. We observe our model displays consistently better or comparable performance in terms of both classification accuracy and estimation of CMs with dense labels (Fig.~\ref{fig:pairwise_flips}(a) and (c)). When each example receives only one label from one of the annotators, we observe the same trend as long as the mean CMs are diagonally dominant (Fig.~\ref{fig:pairwise_flips}(b,d)). We also observe that when the diagonal dominance holds, all three methods perform better than the annotators. On the other hand, when the diagonal dominance does not hold (see the grey regions), all models undergo a steep drop in classification accuracy due to the inability to estimate CMs accurately as reflected in Fig.~\ref{fig:pairwise_flips}(c,d), which is consistent with Theorem.~\ref{theorem:main}. Fig.~\ref{fig:diagonal_dominance_violation} also visualizes the average of the estimated CMs at this break point. We also note that with only one label per image, the generalized EM algorithm  \cite{raykar2009supervised,raykar2010learning} is not capable of recovering CMs at all and predict identity matrices (Fig.~\ref{fig:diagonal_dominance_violation}), which renders the model equivalent to a vanilla classifier directly trained on noisy labels. A similar set of results in the ``spammer-hammer" case are also available in the supplementary materials.
% This effectively means that our model and MBEM show lower classification accuracy than the vanilla CNN when the diagonal dominance does not hold, and this can be explained by the fact that wrong specification of CMs lead to 

On CIFAR-10 dataset, Tab.~\ref{tab:cifar10} shows that our method outperforms MBEM and the generalized EM in terms of both classification accuracy and CM estimation by a large margin. In addition, the standard deviations of these metrics are generally smaller for our method than for the baselines. Fig.~\ref{fig:recover_cms} illustrates that our method can estimate CMs of the 4 very different annotators even when each image receives only one label. Interestingly, Tab.~\ref{tab:cifar10} shows that even removing the trace norm can achieve reasonably high classification accuracy and low CM estimation error. We believe this is because of the unexplained robustness of a deep CNN to label noise. Nevertheless, adding the trace norm improves the performance, and we also observe on MNIST that such improvement is pronounced in the presence of larger noise (see supplementary materials).

%%%%%%%%%%

% \begin{figure*}[ht]
% 	\center
% 	\vspace{-2mm}
% 	\begin{subfigure}[]{0.48\linewidth}
% 		\includegraphics[width=\linewidth]{figures/figure_convergence_rate_allcombined_05.png}
% 	\end{subfigure}
% % 	\hspace{0.5\linewidth}
% 	\begin{subfigure}[]{0.48\linewidth}
% 		\includegraphics[width=\linewidth]{figures/figure_frobeniusnorm_allcombined_07.png}
% 	\end{subfigure}
% 	\caption{Curves of validation accuracy (top) and the quality of CM estimation (bottom) during training of our method, generalized EM and MBEM for a range of hyper-parameter specifications. For our method, we experimented varying the scaling of the trace regularizer in the range $[0.001, 0.01, 0.1]$. For both EM and MBEM, we varied the number of EM steps, denoted by T, and the number of gradient descent steps per E-step, denoted by $G$ while fixing the total number of training iterations at $100,000$.}
% 	\label{fig:robustness_to_hyperparams}
% 	\vspace{-2mm}
% \end{figure*}

%%%%%%%%%%%%%% ARDAVAN %%%%%%%%%%%%%%

\begin{table}
    \begin{subtable}[t]{0.9\linewidth}
    \footnotesize
        \begin{tabular}{lcc}
            \hline
                \toprule
                Method    & Accuracy  & CM error \\
                \midrule
                Our method                                  & $\bm{81.23 \pm 0.21}$ & $\bm{0.72 \pm  0.01}$ \\
                Our method (no trace norm)                  & $80.29 \pm 0.65$ & $1.37 \pm  0.12$ \\
        
                MBEM \cite{khetan2017learning}              & $73.33 \pm 0.46$ & $2.53 \pm 0.24$  \\
                generalized EM \cite{raykar2009supervised}  & $70.49 \pm 0.23$ & $6.13 \pm 0.28$  \\
                \midrule
                Single CM \cite{sukhbaatar2014training}  & $68.82 \pm 2.27$  & -   \\
                Weighted Doctor Net \cite{guan2017said}     & $60.11 \pm 1.80$   &  -  \\
                Soft-bootstrap \cite{reed2014training}      & $54.73 \pm 1.33$ & -    \\
                Vanilla CNN \cite{reed2014training}         & $52.33 \pm 0.31$ & -    \\
        \hline
        \end{tabular}
        \caption{Dense labels}
        \end{subtable}\hfill
    \vspace{5mm}
    \begin{subtable}[t]{0.9\linewidth}
    \footnotesize
        \begin{tabular}{lcc}
        \hline
            \toprule
            Method    & Accuracy  & CM error \\
            \midrule
            Our method                                  & $\bm{77.65\pm0.31}$ & $\bm{1.22\pm0.01}$\\
            Our method (no trace norm)                  & $76.31\pm0.49$ & $1.46 \pm 0.27$ \\
    
            MBEM \cite{khetan2017learning}              & $55.97 \pm 1.23$ & $4.58\pm0.64$ \\
            generalized EM \cite{raykar2009supervised}  & $53.38 \pm 0.71$ & $4.47\pm0.64$  \\
            \midrule
            Single CM \cite{sukhbaatar2014training}  & $59.91 \pm 0.98$  & -   \\
            Weighted Doctor Net \cite{guan2017said}     & $57.98\pm0.14$ &  -  \\
            Soft-bootstrap \cite{reed2014training}      & $42.91\pm1.08$ & -    \\
            Vanilla CNN \cite{reed2014training}         & $36.04\pm1.04 $ & -    \\
        \hline
        \end{tabular}
        \caption{1 label per image}
    \end{subtable}
\caption{\small Mean classification accuracy and CM estimation errors ($\times 10^{-2}$) on CIFAR-10 with dense labels. Average annotator accuracy is $45\%$. Standard deviations are computed based on 3 runs with varied weight initialization.}
\label{tab:cifar10}
\vspace{-7mm}
\end{table}
% \begin{table}[h]
% \footnotesize
% \begin{center}
%     \begin{tabular}{lcc}
%     \hline
%         \toprule
%         Method    & Accuracy  & CM error \\
%         \midrule
%         Our method                                  & $77.65\pm0.31$ & $1.22\pm0.01$\\
%         Our method (no trace norm)                  & $76.31\pm0.49$ & $1.46 \pm 0.27$ \\

%         MBEM \cite{khetan2017learning}              & $55.97 \pm 1.23$ & $4.58\pm0.64$ \\
%         generalized EM \cite{raykar2009supervised}  & $53.38 \pm 0.71$ & $4.47\pm0.64$  \\
%         \midrule
%         Single CM \cite{sukhbaatar2014training}  & $59.91 \pm 0.98$  & -   \\
%         Weighted Doctor Net \cite{guan2017said}     & $57.98\pm0.14$ &  -  \\
%         Bootstrap-soft \cite{reed2014training}      & $42.91\pm1.08$ & -    \\
%         Vanilla CNN \cite{reed2014training}         & $36.04\pm1.04 $ & -    \\
% \hline
% \end{tabular}
% \end{center}
% \vspace{-4mm}
% \caption{CIFAR-10 with sparse labels}
% \end{table}
\paragraph{Sensitivity to Hyper-parameters.} 
We next study the robustness of our method against the generalized EM and MBEM to the specification of hyper-parameters. We used the group of five pairwise-flippers with the mean skill level $p=0.35$ to generate noisy labels on MNIST data set. For our model, we compare the effects of the scaling $\lambda$ of the trace-norm in eq.~\ref{eq:objective_sparse} on the trajectory of classification accuracy on the validation set and the quality of CM estimation. For the baselines, we experiment by varying the number of EM steps (denoted by $T$) and the number of stochastic gradient descent for each E-step (denoted by $G$) while fixing the total number of training iterations at $100,000$. We observed our model presents robustness to different values of $\lambda$ as long as the trace-norm loss is not larger than the cross-entropy loss (where the estimated CMs will start to diffuse too much), and Fig.~\ref{fig:robustness_to_hyperparams} shows the stability of the validation curves for $\lambda \in \{0.1, 0.01, 0.001\}$. Both the MBEM and generalized EM show evident dependence on the values of $T$ and $G$ and by and large display slower convergence than our method. We also observe that if too few gradient descents are performed ($G = 1000$) during each E-step, the model converges to a lower accuracy in both classification and CM estimation.

\begin{figure}[ht]
	\center
	\vspace{-2mm}
	\includegraphics[width=0.97\linewidth]{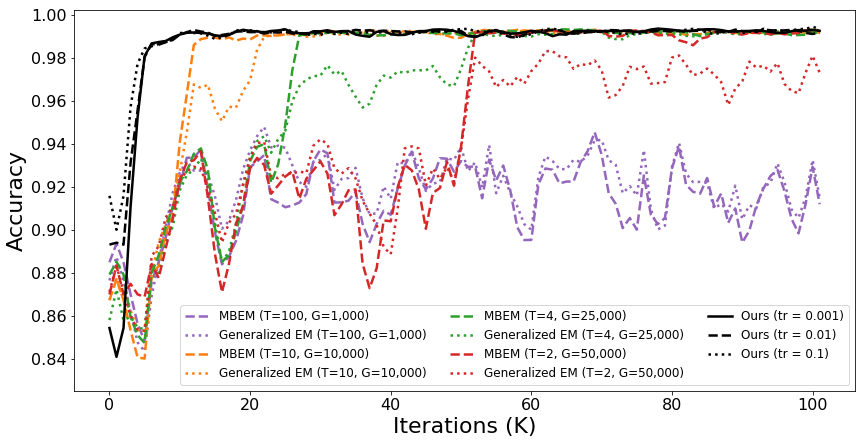}
	\vspace{-2mm}
	\caption{\small Curves of validation accuracy during training of our method, generalized EM and MBEM for a range of hyper-parameters. For our method, the scaling of the trace regularizer is varied in $[0.001, 0.01, 0.1]$. while, for EM and MBEM, we vary the number of EM steps ($T$), and the number of gradient descent steps per E-step ($G$) while fixing the total number of training iterations at $100,000$.}
	\label{fig:robustness_to_hyperparams}
	\vspace{-4mm}
\end{figure}

\begin{figure}[ht]
    \vspace{-1mm}
	\begin{center}
		%\framebox[4.0in]{$\;$}
		\includegraphics[width=\linewidth]{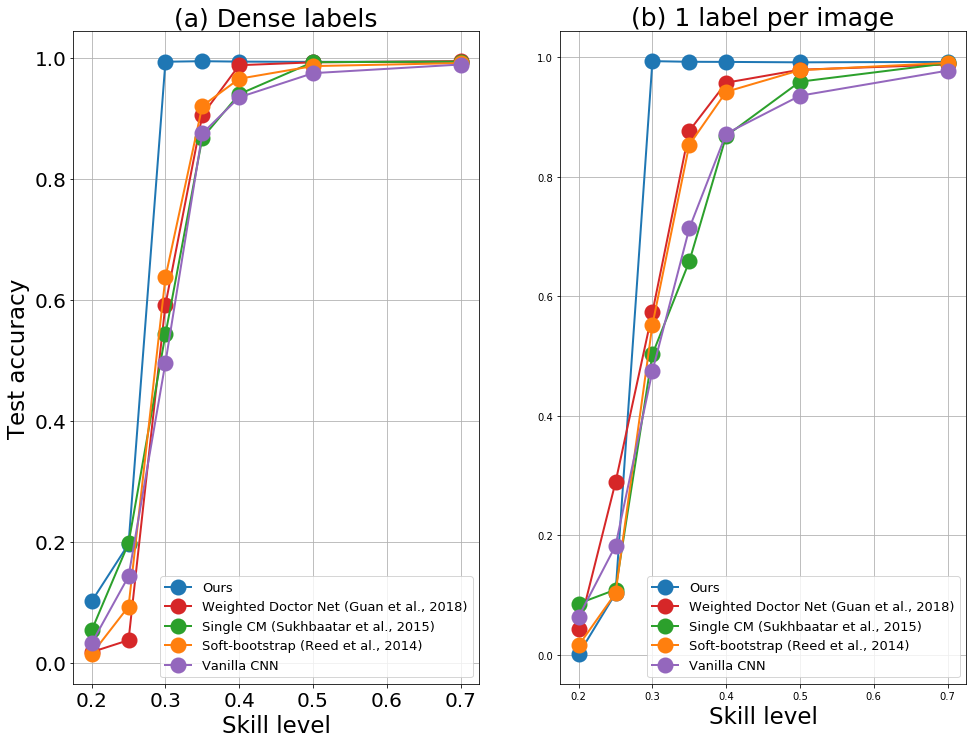}
	\end{center}
 	\vspace{-7mm}

	\caption{\small Classification accuracy on MNIST of different noise-robust models as a function of the mean annotator skill level $p$ in two cases. Here, for each mean skill-level $p$, a group of $5$ ``pairwise flippers" is formed and used to generate labels. (a). each example receives labels from all the annotators. (b). each example is labelled by only 1 randomly selected annotator. }
	\label{fig:accuracy_vs_noise_levels}
	\vspace{-4mm}

\end{figure}

\label{sec:us_experiments}
\begin{figure*}[h]
	\vspace{-8mm}
	\hspace{0mm}
	\center
	\begin{subfigure}[]{0.6\linewidth}
 		\vspace{-2mm}
		\includegraphics[width=\linewidth]{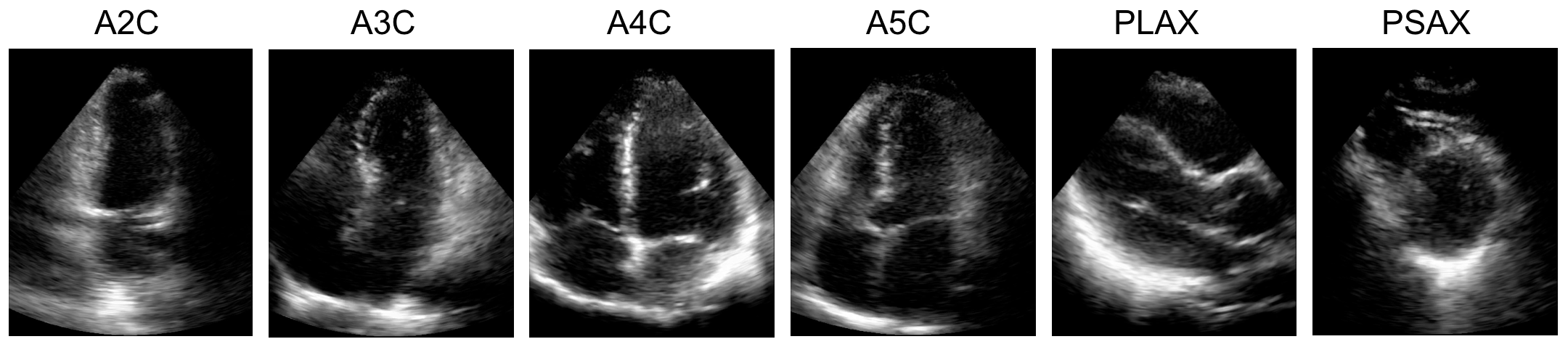}
 		\vspace{-6mm}
		\caption{Different classes of cardiac views}
	\end{subfigure}
	\hspace{30mm}
	\hfill
	\begin{subfigure}[]{0.22\linewidth}
		\includegraphics[width=0.9\linewidth]{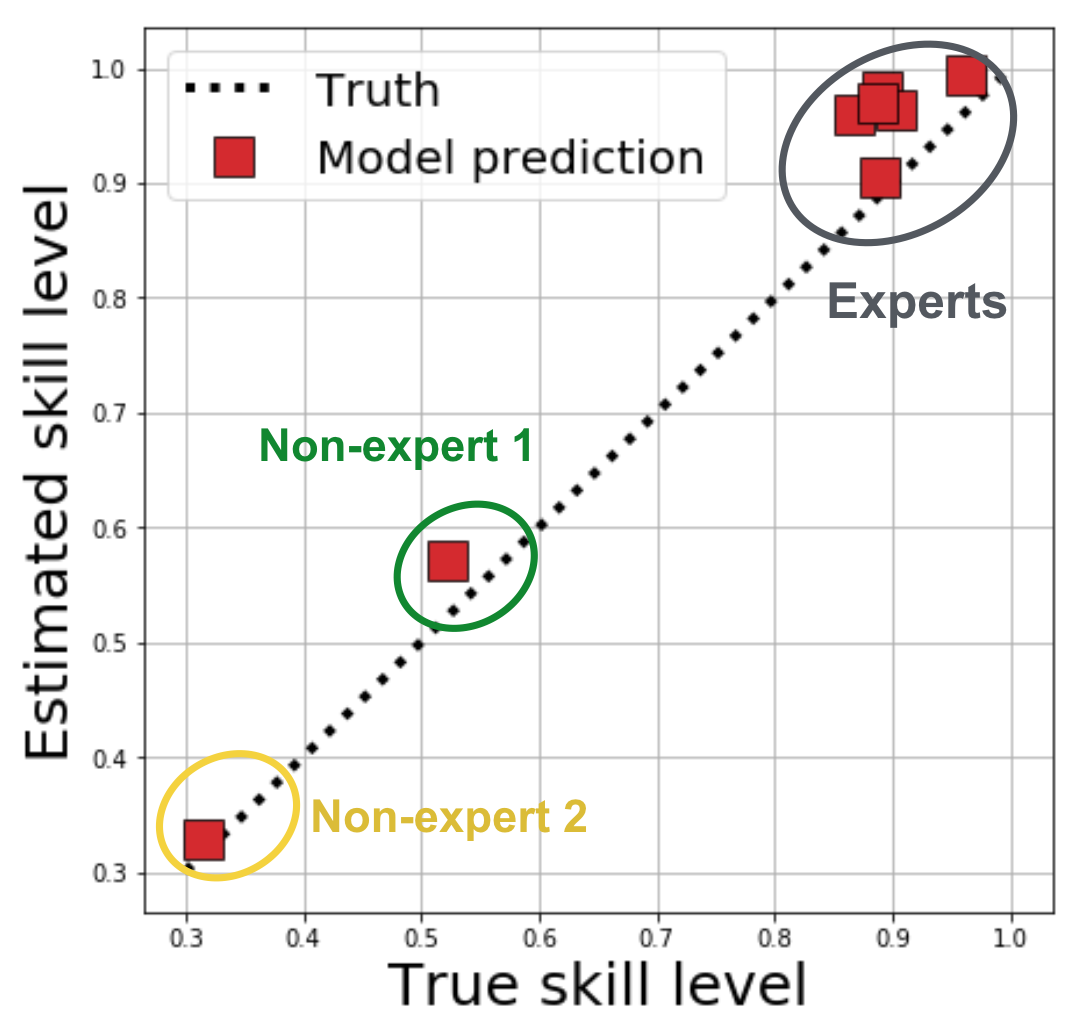}
		\vspace{-2mm}
		\caption{Skill estimation}
	\end{subfigure}
	\hspace{0mm}
	\begin{subfigure}[]{0.35\linewidth}
		\includegraphics[width=\linewidth]{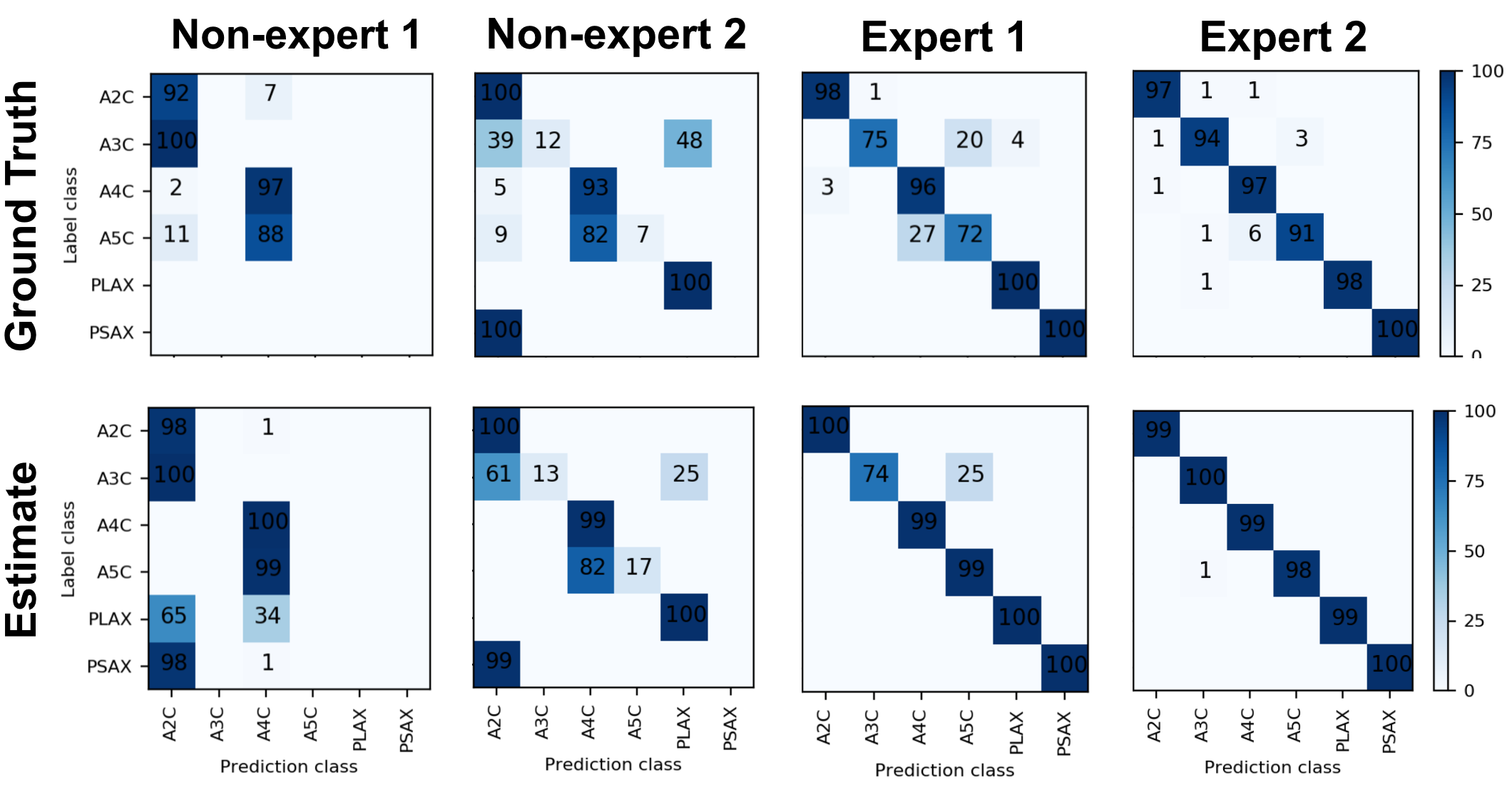}
		\vspace{-5mm}
		 \caption{Learned CMs}
	\end{subfigure}
	\hspace{0mm}
	\begin{subfigure}[]{0.35\linewidth}
	    \footnotesize
	    \begin{tabular}{|c|c c|} 
             \hline
             Method & Accuracy & CM error \\ 
             \hline
             Ours & $\bf{75.57\pm 0.16}$ &  $\bf{11.48 \pm 0.48}$ \\ 
             w/o trace norm & $70.99 \pm 3.31$ & $15.22 \pm0.94$  \\ 
             MBEM \cite{khetan2017learning} & $73.91\pm 0.11$ & $12.18\pm0.29$ \\ 
            %  EM \cite{raykar2009supervised} & x & x \\ 
             WDN \cite{guan2017said} & $59.15 \pm 1.60$ & - \\ 
             Single-CM \cite{sukhbaatar2014training} & $74.38\pm0.29$ & - \\ 
             Soft-bootstrap \cite{reed2014training} & $72.99\pm 0.17$ & - \\ 
             Vanilla CNN & $70.95\pm 0.44$ & -\\ 
             \hline
        \end{tabular}
        \caption{Performance comparison}
	\end{subfigure}
	\vspace{-3mm}
	\caption{\small Results on the cardiac view classification dataset: (a) illustrates examples of different cardiac view images. (b) plots the estimated skill level of each annotator (average of the diagonal elements of its estimated CM)  against the ground truth (c) compares the estimated CMs of the two least skilled and two most skilled annotators according the GT labels (d) summarizes the classification accuracy and error of CM estimation for different methods. }
	\label{fig:us_experiments}
	\vspace{-4mm}
\end{figure*}

\subsection{Value of Modelling Individual Annotators}

\label{sec:other_baselines}
Now, we compare the performance of our method against the prior work that aim to improve robustness to noisy labels without explicitly modelling the individual annotators. The first baseline is the vanilla classifier trained on the majority vote labels. We also compare against the noise robust approaches proposed in \cite{reed2014training} and \cite{sukhbaatar2014training}. Reed \textit{et al.} \cite{reed2014training} adds to the cross-entropy loss a label consistency term based on the negative entropy of the softmax outputs, and we used the default hyper-parameter $\beta = 0.95$ for comparison. Sukhbaatar \textit{et al.} \cite{sukhbaatar2014training} explicitly accounts for the label noise with a single CM, but does not model individual annotators. We add the trace-norm of the same scaling used in our method ($\lambda=0.01$) to the loss function for training. We also include Weighted Doctor Net architecture (WDN) \cite{guan2017said} in the comparison, a recent method that models the annotators individually and then learns averaging weights for combining them. It should be noted that this model considers a different observation model of the labels and does not explicitly model the true label distribution. When we have access to multiple labels per example, with the exception of WDN, we aggregated the labels by computing the majority vote and trained all models. This is because we observed a consistent improvement on validation accuracy (thus poses a tougher challenge against our method) and this would be a more realistic utilization of such data set. For both MNIST and CIFAR-10 experiments, we test on the same set of simulated labels as used in Sec.~\ref{sec:comparison_with_em}.

Fig.~\ref{fig:accuracy_vs_noise_levels} shows better or comparable classification accuracy than all the baselines when the diagonal dominance of the mean CM holds. In particular, our methods show significant improvement when the mean skill level of the annotators are relatively low (e.g. $p=0.3$ and $0.35$). The results are pronounced in the case with only one label available per image for which the baseline methods undergo a steep drop in accuracy (see Fig.~\ref{fig:accuracy_vs_noise_levels}(b)). Results in the ``spammer-hammer" case are available in the supplementary material. Similarly on CIFAR-10 data set, Tab.~\ref{tab:cifar10} shows that our method improves the classification accuracy upon the baselines. Such improvement is pronounced in the case of sparse labels. On the other hand, a vanilla CNN with only L2 weight decay overfits to the training data very quickly in the presence of such high noise.

% \textbf{Experiments}: \begin{enumerate}
% 	\item (MNIST: 70\% done, CIFAR10: 0\%, COCO: 0\%): \textbf{Comparison of classification performance against other robust noise methods which don't model annotators;  \cite{guan2017said}, \cite{sukhbaatar2014training}, \cite{reed2014training}, naive CNN.} Test in two different types of annotator groups; (i) random flips (each annotator corrupts labels randomly) \& (ii) pairwise flips (each annotator flips the label to another chosen class). See Fig. \ref{fig:accuracy_vs_noise_levels}.  \textcolor{red}{TODO: (a) need to implement and include \cite{reed2014training,khetan2017learning}, (b) repeat the experiment with 1 label per image. }
% \end{enumerate}

\vspace{-2mm}
\subsection{Experiments on Cardiac View Classification}
\vspace{-2mm}

Lastly, we illustrate the results of our approach for a real data set with sparse and noisy labels from the medical domain. This data set consists of images of the cardiac region in different views, acquired using a hand-held ultrasound probe. The task is to classify a given ultrasound image into one of six different view classes (see Fig.~\ref{fig:us_experiments}(a)). The process of obtaining a cardiac view label is crucial for guiding the user to the correct locations of measurements, and affects the quality of the downstream cardiac tasks. 
%$\{A2C, A3C, A4C, A5C, PSAX, PLAX\}$

A committee of sonographers (with varying levels of experience) were tasked with providing the cardiac view labels to a large volume of ultra-sound images, and each example is only labelled by a subset of them. To acquire ground truth in this setting, we chose those samples where the three most experienced sonographers agreed on a given label. The resulting data set consists of noisy labels provided by the remaining less experienced $6$ sonographers for a total of $240,000$ training images and $22,000$ validation images. In addition, we also acquired labels from two non-expert users and included in the training data. 

We estimated the skill-level of each annotator by computing the average value of the diagonal elements in the corresponding learned CM, and Fig.~\ref{fig:us_experiments}(b) shows that the group of experts can be separated from the two non-experts with varying levels of experinces (one is less competent than the other). Fig.~\ref{fig:us_experiments}(c) shows that confusion between $A3C$ and $A5C$, even common among experts, can be detected (see the result for `Expert 1') while clearly capturing the patterns of mistakes for the non-experts. In addition, Fig.~\ref{fig:us_experiments}(d) shows that our model outperforms MBEM \cite{khetan2017learning} again in classification accuracy and the quality of CM estimation. Lastly, the higher classification accuracy of our model with respect to the other baseline models illustrates again that modelling individual annotators improves robustness to label noise. 

% The experiments performed in this section illustrate how our model can both achieve better accuracy and can recover individual annotator CMs. 

% (previous version)
% For each image, a committee of trained sonographers (with varying levels of experience) were tasked with providing the cardiac view label. To acquire ground truth in this setting, we chose those samples where the more experienced sonographers agreed on a given label. The resulting dataset consisted of noisy labels provided by the other six sonographers for a total of 240000 training images and 22000 validation images. In addition to the labels from sonographers, we also acquired labels from two non-expert users that are both noisy and sparse i.e. they label only a subset of the six classes. The experiments performed in this section illustrate how our model can both achieve better accuracy and can recover individual annotator CMs. 
\vspace{-2mm}
\section{Discussion and Conclusion}
\vspace{-2mm}

We introduced a new theoretically grounded algorithm for simultaneously recovering the label noise of multiple annotators and the ground truth label distribution. Our method enjoys implementation simplicity, requiring only adding a regularization term to the loss function. Experiments on both synthetic and real data sets have shown superior performance over the common EM-based methods in terms of both classification accuracy and the quality of confusion matrix estimation. Comparison against the other modern noise-robust methods demonstrates that the modelling individual annotators improves robustness to label noise. Furthermore, the method is capable of estimating annotation noise even when there is a single label per image. 

Our work was primarily motivated by medical imaging applications for which the number of classes are mostly limited to below 10. However, future work shall consider imposing structures on the confusion matrices to broaden up the applicability to massively multi-class scenarios e.g. introducing taxonomy based sparsity \cite{van2018lean} and low-rank approximation. We also assumed that there is only one ground truth for each input; this no longer holds true when the input images are truly ambiguous---recent advances in modelling multi-modality of label distributions \cite{saeedi2017multimodal,kohl2018probabilistic} potentially facilitate relaxation of such assumption. Another limiting assumption is the image independence of the annotator's label noise. The majority of disagreement between annotators arise in the difficult cases. Integrating such input dependence of label noise \cite{yan2010modeling,xiao2015learning} is also a valuable next step.

% In this work, we have made assumptions about the annotator noise models. Firstly, we assumed that the label noise of each annotator is independent of the input image. Secondly, the annotators were assumed to be independent. Thirdly, we assumed that there is only ground truth for a given input, which is certainly not true. 

% Future work shall improve the estimation phase by incor- porating priors of the noise structure, for example assuming low rank T . Improvements on this direction may also widen the applicability to massively multi-class scenarios. It remains an open question whether instance-dependent noise may be included into our approach [42, 25]. Finally, we anticipate the use of our approach as a tool for pre-training models with noisy data from the Web, in the spirit of [17].

\vspace{-4mm}
\subsubsection*{Acknowledgments}
\vspace{-3mm}
We would like to thank Alon Daks, Israel Malkin and Pouya Samangouei at Butterfly Network for their feedback, and Dr. Linda Moy, MD of NYU Langone Medical Center for providing references on inter-reader variability in radiology. RT is supported by Microsoft Research Scholarship. 

% \textbf{Multiple schools of thoughts:} \cite{saeedi2017multimodal,kohl2018probabilistic}

% \textbf{Assumptions of input independence and independent annotators:}
% \textbf{}

\clearpage
{\small
\bibliographystyle{unsrt}
\bibliographystyle{ieee}
\bibliography{egpaper_final}
}

\newpage

\appendix
\section{Data sets, training and architectures}
\textbf{Data sets. } In this work, we verified our method on three classification datasets: MNIST digit classification dataset \cite{lecun1998gradient}; CIFAR-10 object recognition dataset \cite{krizhevsky2009learning}; the cardiac view classification (CVC) dataset from a handheld ultra-sound probe. The MNIST dataset consists of $60,000$ training and $10,000$ testing examples, all of which are $28\times28$ grayscale images of digits from $0$ to $9$. The CIFAR-10 dataset consists of $50,000$ training and $10,000$ testing examples, all of which are $32\times32$ coloured natural images drawn from $10$ classes. The CVC data set contains $26,2000$ training and $20,000$ test examples, which are grayscale images of size $96\times96$ from $6$ different cardiac views. Each image is labelled by a subset of $8$ annotators ($6$ sonographers and $2$ non-experts).  
\\ \\
\textbf{Training. } For all experiments, we employ the same training scheme unless otherwise stated. We optimize parameters using Adam \cite{kingma2014adam} with initial learning rate of $10^{-3}$ and $\beta = [0.9, 0.999]$, with minibatches of size $50$ and train for $200$ epochs. For our method, we set the scale of the trace regularization to $\lambda=0.01$. For the training of the EM-based approaches (Model-Bootstrapped EM \cite{khetan2017learning} and generalized EM \cite{raykar2009supervised}), we train the base classifier for $200$ epochs in total over the course of the EM steps, following the same protocol above. For CIFAR-10, we performed two iterations of EM algorithm ($T=2$) and $100$ epochs worth of gradient descent steps during each E-step to update the parameters of the base classifier ($G=100$ epochs), following the original implementation in \cite{khetan2017learning}. For the experiments on the CVC data set, we run more rounds of EM with $T=10$ and $G=20 $ epochs. In all cases, we hold out $10\%$ of training images as a validation set and best model is selected based on the validation accuracy over the course of training. No data augmentation is performed during training in all three data sets. We note that for CIFAR-10, we, in addition, decreased the learning rate by a factor of $10$ at every multiple of $50$ in a similar fashion to the schedule used in \cite{Huang2017LearningDR,he2016deep,springenberg2014striving}.
\\ \\
\textbf{Architectures. } For MNIST, the base classifier was defined as a CNN architecture comprised of $4$ convolution layers, each with $3\times3$ kernels follower by Relu. The number of kerners in respective layers are $\{32, 32, 64, 64\}$. After the first two convolution layers, we perform $2\times2$ max-pooling, and after the last one, we further down-sample the features with Global Average Pooling (GAP) prior to the final fully connected layer. For the CVC dataset, we employed the same architecture, but with increased number of kernels i.e. $\{128, 128, 128, 128\}$. For CIFAR-10, we used a 50-layer ResNet \cite{he2016deep}.

%%%%%%%%%%%%%%%
\section{Confusion matrices of \textit{pairwise-flippers} and \textit{hammer-spammers}}
For MNIST experiments, we considered two different models of annotator types: (i) \textit{pairwise-flipper} and (ii) \textit{hammer-spammer}. Example confusion matrices for both cases are shown in Fig.~\ref{fig:annotator_groups}.
For each annotator type and skill level $p$, we create a group of $5$ annotators by generating confusion matrices (CMs) from the associated distribution. More specifically, each CM is generated by perturbing the mean skill level $p$ by injecting a small Gaussian noise $\epsilon \sim \text{Normal}(0, 0.01)$ and choosing the flipping target class randomly in the case of a \textit{pairwise-flipper}. 

% For each annotator, we perturb the mean skill level by injecting a small Gaussian noise and chooses the flipping target class randomly in the case of a \textit{pairwise-flipper}. \textcolor{red}{More details and a set of example CMs for both cases are given in the supplementary material.} Given the ground truth class labels, we synthesize noisy labels from each annotator according to its CM, and use them as the training data to learn the CM model eq.~\ref{eq:model_likelihood}. It should be noted that in all cases the algorithms have only access to the noisy labels and the identity of annotators that generate them i.e. which expert produced which label, but not the CMs. 

\begin{figure}[ht]
\center
    % \vspace{-5mm}
    \begin{subfigure}[]{1.0\linewidth}
        \caption{Pairwise-flippers with mean skill-level $p=0.3$}
		\includegraphics[width=\linewidth]{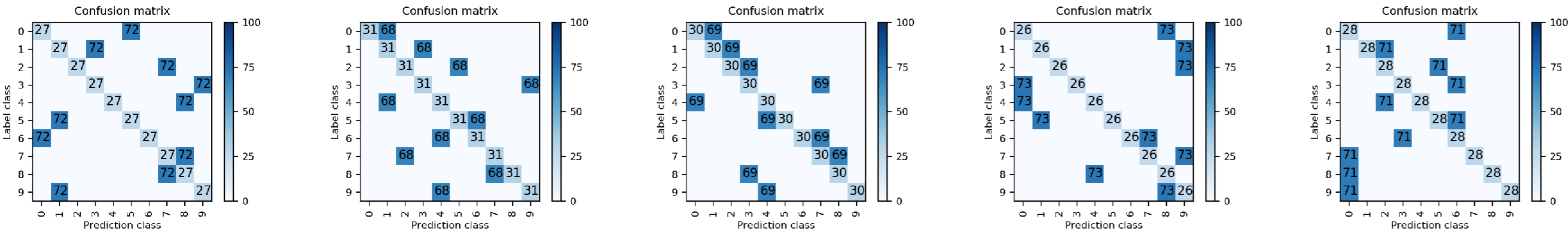}
	\end{subfigure}
	\begin{subfigure}[]{1.0\linewidth}
		\caption{Spammer-hammers skill-level $p=0.5$}
		\includegraphics[width=\linewidth]{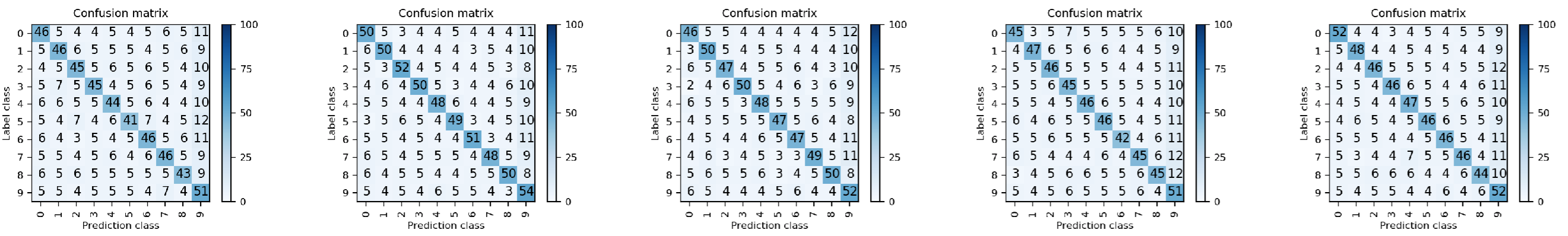}
	\end{subfigure}
	\vspace{-2mm}
	\caption{Examples of annotator groups. The value of diagonal entries are fixed constant for each annotator and is drawn from $\text{Normal}(p, 10^{-2})$.}
	\label{fig:annotator_groups}
% 	\vspace{-5mm}
	
\end{figure}

%%%%%%%%%%%%%%%
% \vspace{-5mm}
\section{Additional experiments on MNIST}
We present results of experiments on MNIST where models are trained on noisy labels from groups of 5 ``hammer-spammers" for a range of mean skil level $p$. Fig.~\ref{fig:random_flips} shows a comparison of our method against other EM-based approaches, while Fig.~\ref{fig:accuracy_vs_noise_levels} compares our method against other noise-robust methods without explicit modelling of individual annotators. Our method consistently achieves comparable or better accuracy with respect to the baselines. 

\begin{figure}[ht]
	\center
	\vspace{-2mm}
	\hfill
	\begin{subfigure}[]{1.0\linewidth}
		\includegraphics[width=\linewidth]{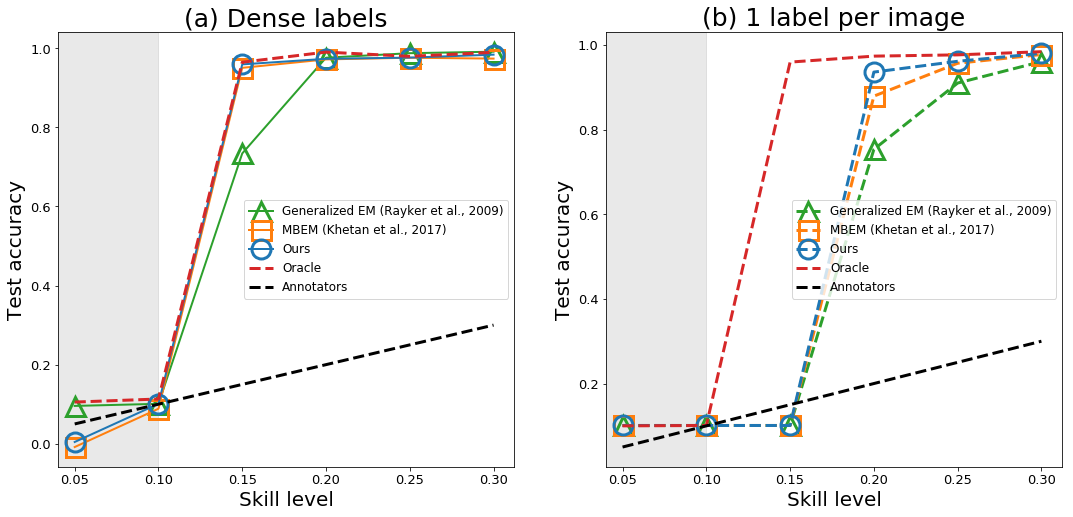}
	\end{subfigure}
	\begin{subfigure}[]{1.0\linewidth}
		\includegraphics[width=\linewidth]{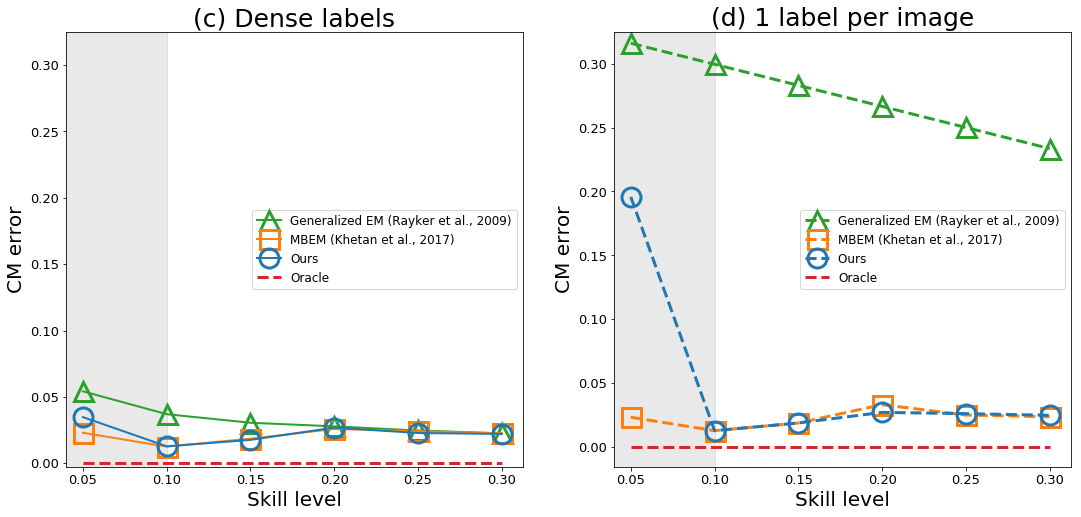}
	\end{subfigure}
	\vspace{-2mm}
	\caption{Comparison between our method, generalized EM, MBEM trained on noisy labels on MNIST from ``hammer-spammers" for a range of mean skill level $p$. (a), (b) show classification accuracy in two cases, one where all annotators label each example and the other where only one label is available per example. (c), (d) quantify the CM recovery error. The shaded areas represent the cases where the average CM over the annotators are not diagonally dominant. }
	\label{fig:random_flips}
	\vspace{-2mm}
\end{figure}

\begin{figure}[ht]
	\begin{center}
		%\framebox[4.0in]{$\;$}
		\includegraphics[width=1.0\linewidth]{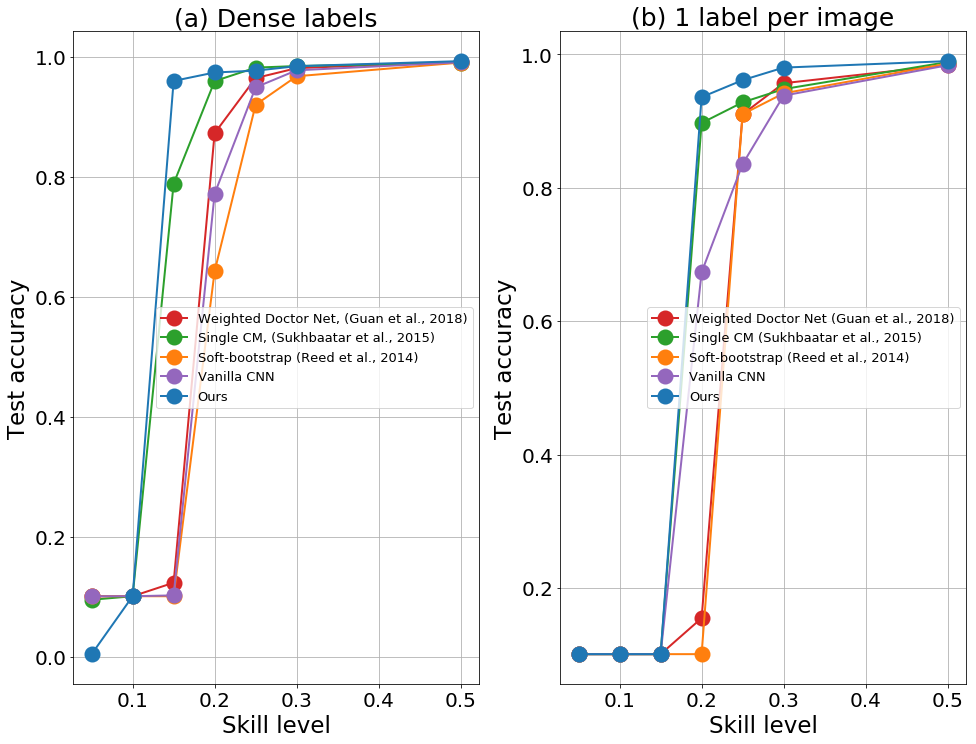}
	\end{center}
% 	\vspace{-5mm}
    \caption{Classification accuracy on MNIST of different noise-robust models as a function of the mean annotator skill level $p$ in two cases. Here, for each mean skill-level $p$, a group of $5$ ``pairwise flippers" is formed and used to generate labels. (a). each example receives labels from all the annotators. (b). each example is labelled by only 1 randomly selected annotator. }
	\label{fig:accuracy_vs_noise_levels}
	\vspace{-5mm}
\end{figure}

\section{Ablation study on trace regularization on MNIST}
We compare our method on MNIST against the case where the trace norm regularization is removed (results on CIFAR-10 and CVC datasets are given in the main text). Fig.~\ref{fig:ablation_trace} shows that adding the trace norm generally improves the performance in terms of both classification accuracy and CM estimation error, and such improvement is pronounced in the presence of larger noise i.e. lower skill levels of annotators. We also observe that when the noise level is low, our model still attains very high accuracy even without trace norm regularization. This can be explained by the natural robustness of the CNN classifier; if the amount of label noise is sufficiently small, the base classifier is still capable of learning the true label distribution well. This, in turn, allows the model to separate annotation noise from true label distribution, improving the quality of CM estimation and thus the overall performance. However, in the presence of large label noise, having trace-norm regularization shows evident benefits. 

\begin{figure}[ht]
	\center
% 	\vspace{-2mm}
	\hfill
	\begin{subfigure}[]{1.0\linewidth}
		\includegraphics[width=\linewidth]{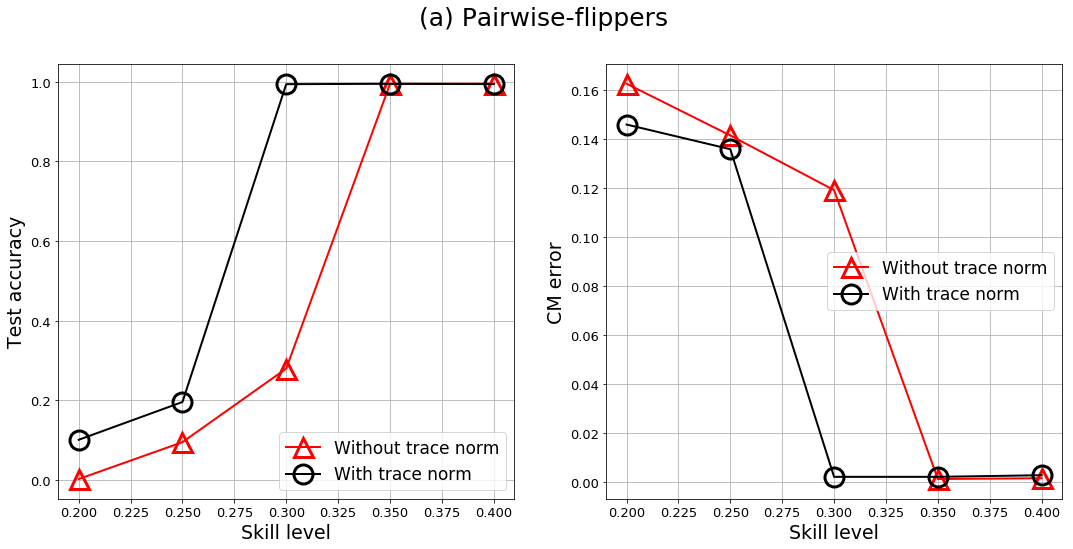}
	\end{subfigure}
	\hfill
	\begin{subfigure}[]{1.0\linewidth}
		\includegraphics[width=\linewidth]{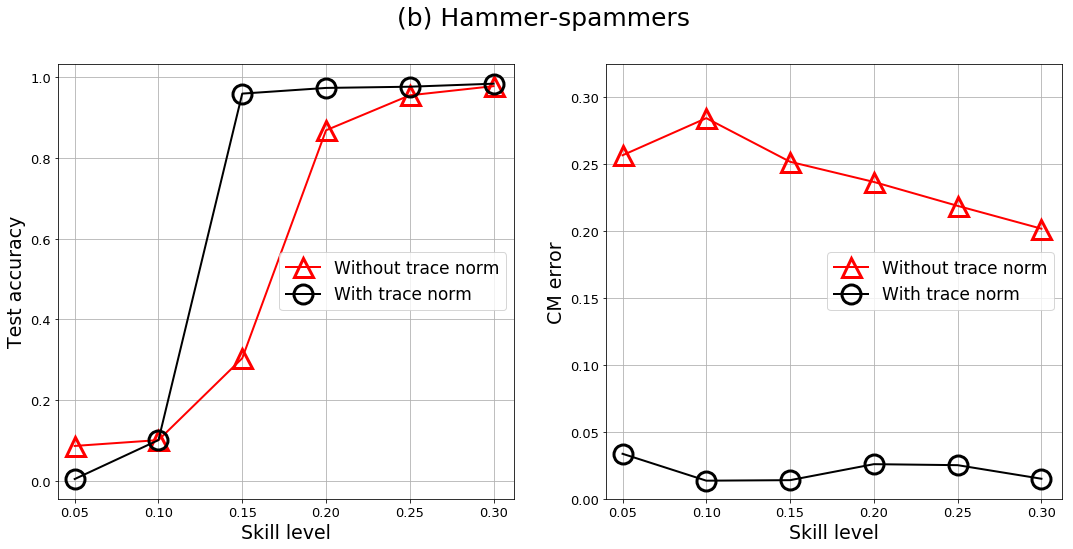}
	\end{subfigure}

	\caption{Comparison between our method with and without trace norm  on MNIST. Results for two annotator groups, consisting of ``hammer-spammers" and "pairwise-flippers" are shown for a range of mean skill level $p$.  }
	\label{fig:ablation_trace}
% 	\vspace{-2mm}
\end{figure}

%%%%%%%%%%%%%%%
\section{Pseudo-codes of our method, generalized EM and MBEM}
% \vspace{-3mm}
Here we provide pseudo-codes of our method (Algorithm 1), generalized EM \cite{raykar2009supervised} (Algorithm 2) and model-bootstrapped EM \cite{khetan2017learning} (Algorithm 3) to clarify the differences between different methods for jointly learning the true label distribution and confusion matrices of annotators in eq.~2 in the main text. Given the training set $\mathcal{D} = \{\textbf{x}_n, \tilde{y}^{(1)}_{n},...,\tilde{y}^{(R)}_{n}\}_{n=1}^{N}$, each example may not be labelled by all the annotators. In such cases, for ease of notation, we assign pseudo class $\tilde{y}^{(r)}_{n}=-1$ to fill the missing labels. The comparison between these three algorithms illustrates the implementational simplicity of our method, despite the comparable or superior performance demonstrated on all three datasets. 

\begin{algorithm*}
	\caption{Our method}
	\label{alg:ourmethod}
	\footnotesize
	\begin{algorithmic}
		%\Require Topology $\mathbb{T}$, parameters $\theta$
		\State \textbf{Inputs}: $\mathcal{D} = \{\textbf{x}_n, \tilde{y}^{(1)}_{n},...,\tilde{y}^{(R)}_{n}\}_{n=1}^{N}$,
$\lambda:$ scale of trace regularizer
		\State \textbf{Initialize the confusion matrices $\{\hat{\mathbf{A}}^{(r)}\}_{r=1}^{R}$ to identity matrices}
		\State \textbf{Initialize the parameters of the base classifier $\theta$}
		\State \textbf{Learn $\theta$ and $\{\hat{\mathbf{A}}^{(r)}\}_{r=1}^{R}$ by performing minibatch SGD on the combined loss}:
		\vspace{-2mm}
		$$\theta, \{\hat{\mathbf{A}}^{(r)}\}_{r=1}^{R} \longleftarrow \text{argmin}_{\theta, \{\hat{\mathbf{A}}^{(r)}\}} \Big{[}\sum_{i=1}^{N}\sum_{r=1}^{R} \mathbbm{1}(\tilde{y}_{i}^{(r)} \neq -1)\cdot \text{CE}(\hat{\mathbf{A}}^{(r)}\hat{\textbf{p}}_{\theta}(\mathbf{x}_i), \tilde{y}^{(r)}_{i})  + \lambda \sum_{r=1}^{R}\text{tr}(\hat{\textbf{A}}^{(r)}) \Big{]}$$ 
	    \vspace{-3mm}
		\State \textbf{Return:   } $\hat{\mathbf{p}}_{\theta}$ and $\{\hat{\mathbf{A}}^{(r)}\}_{r=1}^R$
	\end{algorithmic}
\end{algorithm*}
\vspace{-5mm}
\begin{algorithm*}
	\caption{Generalized EM \cite{raykar2009supervised}}
	\label{alg:generalised_em}
	\footnotesize
	\begin{algorithmic}
		%\Require Topology $\mathbb{T}$, parameters $\theta$
		\State \textbf{Inputs}: $\mathcal{D} = \{\textbf{x}_n, \tilde{y}^{(1)}_{n},...,\tilde{y}^{(R)}_{n}\}_{n=1}^{N}$,
$T:$ \# EM steps, $G:$ \# SGD in each M-step
		
		\State \textbf{Initialize posterior distribution by the mean labels}: for $j=1,..., L, n=1,...,N$
		\vspace{-2mm}
			$$q_{nj} ^{(0)}: = p(y_{n}=j|\textbf{x}_{n}, \{\tilde{y}^{(r)}_{n}\}_r, \theta^{(0)}) \longleftarrow R^{-1}\sum_{r=1}^R \mathbbm{1}(\tilde{y}_{i}^{(r)}=j) $$
		\vspace{-3mm}
		
		\State \textbf{Initialize the parameters of the base classifier $\theta$}
		\State \textbf{Repeat $T$ times:}
		\State $\,\,\,\,\,\,\,\,$ \textbf{M-step for $\theta$}. Learn the base classifier  $\hat{\mathbf{p}}_{\theta}$ by performing minibatch SGD for $G$ iterations 
		\vspace{-2mm}
		$$\theta^{(t+1)} \longleftarrow \text{argmin}_{\theta}\Big{[}-\sum_{n=1}^{N}\sum_{l=1}^{L}q_{nj} ^{(t)}\cdot \text{log } p(y_{n}=l|\mathbf{x}_{n}, \theta) \Big{]}$$ 
	    \vspace{-3mm}
	    
		\State $\,\,\,\,\,\,\,\,$ \textbf{M-step for $\{\hat{\mathbf{A}}^{(r)}\}_{r=1}^{R}$}. Estimate the confusion matrices 
		\vspace{-2mm}
		$$\hat{a}_{ji}^{(r),t+1} \longleftarrow \frac{\sum_{n=1}^{N}\mathbbm{1}(\tilde{y}_{i}^{(r)} \neq -1)\cdot \mathbbm{1}(\tilde{y}^{(r)}_{n}=i)\cdot q_{nj} ^{(t)}}{\sum_{n=1}^{N} \mathbbm{1}(\tilde{y}_{i}^{(r)} \neq -1)\cdot q_{nj} ^{(t)}}$$ 
		\vspace{-3mm}
		\State $\,\,\,\,\,\,\,\,$  \textbf{E-step}. Estimate the posterior label distribution
		\vspace{-2mm}
		$$q_{nj} ^{(t+1)} \longleftarrow \frac{ p(y_{n}=j|\mathbf{x}_{n}, \theta^{(t+1)})\cdot\prod_{r=1}^{R}\big{(}\hat{a}^{(r),t+1}_{j\tilde{y}^{(r)}_{n}}\big{)}^{\mathbbm{1}(\tilde{y}_{i}^{(r)} \neq -1)}}{\sum_{l=1}^L p(y_{n}=l|\mathbf{x}_{n}, \theta^{(t+1)})\cdot\prod_{r=1}^{R}\big{(}\hat{a}^{(r),t+1}_{l\tilde{y}^{(r)}_{n}}\big{)}^{\mathbbm{1}(\tilde{y}_{i}^{(r)} \neq -1)}}$$
        \vspace{-3mm}
		\State \textbf{Return:   } $\hat{\mathbf{p}}_{\theta^{(T)}}$ and $\{\hat{\mathbf{A}}^{(r),T}\}_{r=1}^R$
	\end{algorithmic}
\end{algorithm*}
\vspace{-5mm}
\begin{algorithm*}
	\caption{Model-Bootstrapped EM \cite{khetan2017learning}}
	\label{alg:generalised_em}
	\footnotesize
	\begin{algorithmic}
		%\Require Topology $\mathbb{T}$, parameters $\theta$
		\State \textbf{Inputs}: $\mathcal{D} = \{\textbf{x}_n, \tilde{y}^{(1)}_{n},...,\tilde{y}^{(R)}_{n}\}_{n=1}^{N}$,
$T:$ \# EM steps, $G:$ \# SGD in each M-step
		
		\State \textbf{Initialize posterior distribution by the mean labels}: for $j=1,..., L, n=1,...,N$
		\vspace{-2mm}
			$$q_{nj} ^{(0)}: = p(y_{n}=j|\textbf{x}_{n}, \{\tilde{y}^{(r)}_{n}\}_r, \theta^{(0)}) \longleftarrow R^{-1}\sum_{r=1}^R \mathbbm{1}(\tilde{y}_{i}^{(r)}=j) $$
		\vspace{-3mm}
		\State \textbf{Initialize the parameters of the base classifier $\theta$}
		\State \textbf{Repeat $T$ times:}
		\State $\,\,\,\,\,\,\,\,$ \textbf{M-step for $\theta$}. Learn the base classifier  $\hat{\mathbf{p}}_{\theta}$ by performing minibatch SGD for $G$ iterations 
		\vspace{-2mm}
		$$\theta^{(t+1)} \longleftarrow \text{argmin}_{\theta}\Big{[}-\sum_{n=1}^{N}\sum_{l=1}^{L}q_{nj} ^{(t)}\cdot \text{log } p(y_{n}=l|\mathbf{x}_{n}, \theta)\Big{]}$$ 
	    \vspace{-3mm}
	    \State $\,\,\,\,\,\,\,\,$ \textbf{Predict on training examples.} for $n=1,...,N$:
	    \vspace{-2mm}
	    $$c_{n} \longleftarrow \text{argmax}_{l\in\{1,...,L\}}\text{ }p(y_{n}=l|\mathbf{x}_{n}, \theta^{(t+1)}) $$
	    \vspace{-3mm}
		\State $\,\,\,\,\,\,\,\,$ \textbf{M-step for $\{\hat{\mathbf{A}}^{(r)}\}_{r=1}^{R}$}. Estimate the confusion matrices. For $i,j=1,...,L$ and $r=1,...,R$:
		\vspace{-2mm}
		$$\hat{a}_{ji}^{(r),t+1} \longleftarrow \frac{\sum_{n=1}^{N}\mathbbm{1}(\tilde{y}_{i}^{(r)} \neq -1)\cdot \mathbbm{1}(\tilde{y}^{(r)}_{n}=i)\cdot \mathbbm{1}(c_{n}=j)}{\sum_{n=1}^{N} \mathbbm{1}(\tilde{y}_{i}^{(r)} \neq -1)\cdot \mathbbm{1}(c_{n}=j)}$$ 
		\vspace{-3mm}
		 \State $\,\,\,\,\,\,\,\,$ \textbf{Update prior label distribution.} for $l=1,...,L$:
	    \vspace{-3mm}
	    $$p_{l} \longleftarrow N^{-1}\sum_{n=1}^N \mathbbm{1}(c_{n}=l) $$
	    \vspace{-3mm}
		\State $\,\,\,\,\,\,\,\,$  \textbf{E-step}. Estimate the posterior label distribution
		\vspace{-2mm}
		$$q_{nj} ^{(t+1)} \longleftarrow \frac{p_{j}\cdot\prod_{r=1}^{R}\big{(}\hat{a}^{(r),t+1}_{j\tilde{y}^{(r)}_{n}}\big{)}^{\mathbbm{1}(\tilde{y}_{i}^{(r)} \neq -1)}}{\sum_{l=1}^L p_{l}\cdot\prod_{r=1}^{R}\big{(}\hat{a}^{(r),t+1}_{l\tilde{y}^{(r)}_{n}}\big{)}^{\mathbbm{1}(\tilde{y}_{i}^{(r)} \neq -1)}}$$
        \vspace{-3mm}
		\State \textbf{Return:   } $\hat{\mathbf{p}}_{\theta^{(T)}}$ and $\{\hat{\mathbf{A}}^{(r),T}\}_{r=1}^R$
	\end{algorithmic}
\end{algorithm*}
\clearpage
\newpage
\onecolumn
\section{TensorFlow codes}
\subsection{Probabilistic model and loss function}

\lstinputlisting[language=Python, caption={Implementation of the probabilistic model and the proposed loss function given in eq.~\eqref{eq:objective_sparse}. The final loss is minimized to learn jointly the confusion matrices of the respective annotators and the parameters of the classifier. The details of the used functions are given in Sec.~\ref{sec:code_conf_mat} \&~\ref{sec:code_loss}} , label={lst:rectified relu}]{main_code.py}

\subsection{Defining confusion matrices of annotators}\label{sec:code_conf_mat}
\lstinputlisting[language=Python, caption={The confusion matrices are defined as \texttt{tf.Variable}. The positivity of the elements of confusion matrices is ensured by passing them through a soft-plus function. }, label={lst:rectified relu}]{confmat.py}

\newpage
\subsection{Cross-entropy loss with sparse and noisy labels}\label{sec:code_loss}
\lstinputlisting[language=Python, caption={Implementation of the cross entropy loss function. Here we use zero vectors as the ``one-hot'' representations of missing labels.}, label={lst:rectified relu}]{snippet.py}

\end{document}